\newtheorem{thm}{Theorem}
\newtheorem{rem}{Remark}
\newtheorem{col}{Corollary}
\newtheorem{mydef}{Definition}
\newtheorem{lemma}{Lemma}
\begin{document}
% The file aaai.sty is the style file for AAAI Press
% proceedings, working notes, and technical reports.
%
\title{Robust Elastic Net Regression}
%\author{AAAI-16 Submission\\Paper \#129}
\author{Weiyang Liu\\
Peking Univ.\\
{\tt\small wyliu@pku.edu.cn}
\and
Rongmei Lin\\
Peking Univ.\\
{\tt\small scutrmlin@gmail.com}
\and
Meng Yang\\
Shenzhen Univ.\\
{\tt\small yang.meng@szu.edu.cn}
}
\maketitle
\begin{abstract}
\begin{quote}
We propose a robust elastic net (REN) model for high-dimensional sparse regression and give its performance guarantees (both the statistical error bound and the optimization bound). A simple idea of trimming the inner product is applied to the elastic net model. Specifically, we robustify the covariance matrix by trimming the inner product based on the intuition that the trimmed inner product can not be significantly affected by a bounded number of arbitrarily corrupted points (outliers). The REN model can also derive two interesting special cases: robust Lasso and robust soft thresholding. Comprehensive experimental results show that the robustness of the proposed model consistently outperforms the original elastic net and matches the performance guarantees nicely.
\end{quote}
\end{abstract}
\section{Introduction}
Over the past decades, sparse linear regression has been and is still one of the most powerful tools in statistics and machine learning. It seeks to represent a response variable as a sparse linear combination of covariates. Recently, sparse regression has received much attention and found interesting applications in cases where the number of variables $p$ is far greater than the number of observations $n$. The regressor in high-dimensional regime tends to be sparse or near sparse, which guarantees the high-dimensional signal can be efficiently recovered despite the underdetermined nature of the problem \cite{candes2005decoding,donoho2006compressed}. However, data corruption is very common in high-dimensional big data. Research has demonstrated the current sparse linear regression (e.g. Lasso) performs poorly when handling dirty data \cite{chen2013robust}. Therefore how to robustify the sparse linear regression becomes a major concern that draws increasingly more attentions.
\par
Robust sparse linear regression can be roughly categorized into several lines of researches. One type of representative approaches is to first remove the detected outliers and then perform the regression. However, outlier removal is not suitable for the high-dimensional regime, because outliers might not exhibit any strangeness in the ambient space due to the high-dimensional noise \cite{xu2013outlier}. Another type of approaches include replacing the standard mean squared loss with a more robust loss function such as trimmed loss and median squared loss. Such approaches usually can not give performance guarantees. Methods \cite{laska2009exact,nguyen2013exact,li2013compressed} have been developed to handle arbitrary corruption in the response variable, but fail with corrupted covariates. \cite{loh2012high} proposes a robust Lasso that considers the stochastic noise or small bounded noise in the covariates. \cite{chen2013noisy} also considers similar corruption settings and proposes a robust OMP algorithm for Lasso. For the same noise, \cite{rosenbaum2010sparse} proposes the matrix uncertainty selector that serves as a robust estimator.
\par
Other than the above noise, the persistent noise \cite{chen2013robust,feng2014robust} which is ill-modeled by stochastic distribution widely exists in many practical applications. We consider this type of noise in the paper. We make no assumptions on the distribution, magnitude or any other such properties of the noise (outliers). The only prior that we use is the bound on the fraction of corrupted samples. Aiming to address such noise problem in sparse linear regression, \cite{chen2013robust} proposes a robust Lasso that utilizes trimmed statistic. \cite{feng2014robust} also considers the same noise in the logistic regression using trimmed statistic. Yet robust Lasso still has some inevitable limitations that are inherent from the Lasso \cite{tibshirani1996regression}:
\begin{itemize}
\item Variable Saturation: when $p>n$, the Lasso selects at most $n$ variables after saturation \cite{efron2004least}
\item Non-Grouping Effect: suppose there is a group of highly correlated variables, the Lasso tends to select only one variable from the group. \cite{zou2005regularization}
\item Invalidation: when $p<n$ and there are high correlations between variables, it can be observed that the selection performance of the Lasso is determined by the ridge regression \cite{tibshirani1996regression}.
\end{itemize}
\par
Elastic net \cite{zou2005regularization} is proposed to address the above shortcomings by combining the Lasso and Ridge constraints. However, similar to the Lasso, its robustness can not be guaranteed. In order to address such shortcomings and simultaneously achieve the robustness, we propose the robust elastic net (REN) for sparse linear regression. Specifically, REN adopts the trimmed inner product to robustly estimate the covariance matrix while adopting a mixed $l_1$ and $l_2$ constraints. REN combines the robust Lasso \cite{chen2013robust} and the robust soft thresholding (RST) into a unified framework, and provides a unified performance guarantee. Our main theorem provides bounds on the fraction of corrupted samples that REN can tolerate, while still bounding the $l_2/l_1$ error of the support recovery.
\subsection{Related Work}
Sparse recovery without corruption has been well studied in the past years, even for the high-dimensional settings $p\geq n$. When the covariance matrix satisfies some conditions, e.g. Eigenvalue property \cite{bickel2009simultaneous,van2009conditions}, $\beta^*$ can be recovered with high probability. It is also learned that some random designs of matrix $\bm{X}$ also satisfies such property \cite{raskutti2010restricted}. Various estimators are also proposed to solve the $l_1$ regularized least squares problem, including basis pursuit \cite{chen1998atomic}, orthogonal matching pursuit \cite{tropp2004greed}, etc. However, these existing methods are not robust to outlier. Lasso and OMP are sensitive to corruptions. Even one entry in $\bm{X}$ or $\bm{y}$ may paralyze Lasso and OMP. Some work considers a natural modification of Lasso by adding a corruption term and regularizing it to account for certain types of corruptions. This modification is non-convex due to the bilinearity and therefore the performance guarantees can not be provided. In order to give provable performance guarantees, \cite{loh2012high} considers several corruptions including additive noise, multiplicative noise, missing data, and further construct the robust covariance matrix surrogates that could recover the sparse support with high probability. However, it has strong assumption on the corruption type. \cite{chen2013robust} improves the robust surrogates with trimmed estimators for Lasso. Similarly, \cite{feng2014robust} applies the trimmed statistics to proposes a robust logistic regression. Such idea of trimming also appears in designing robust PCA \cite{hauberg2014grassmann}. On the other hand, current work on robustifing the elastic net includes reducing the effect of outliers with reweighted procedure \cite{wang2014robust}, which is similar to the idea in \cite{yang2011robust}. Like most outlier detection based approaches, using reweighted procedure in the elastic net may fail with large-scale high-dimensional data.
\subsection{Contributions}
We utilize the trimmed statistic to design a robust elastic net for high-dimensional linear regression. The contributions can be summarized as follows.
\begin{itemize}
\item The robust elastic net regression is proposed with an efficient solving algorithm. The REN model extends the results in \cite{chen2013robust}.
\item We derives the statistical error bound for the support recovery of the REN model. As special cases, we also discuss the statistical error bound for the robust Lasso and the RST.
\item To validate the statistical error bound we derived, we further propose an efficient projected gradient descent and gives the provable optimization error bound when using this algorithm. The optimization error bound guarantees the statistical error bound can be achieved in practice.
\end{itemize}
\section{Robust Elastic Net Regression}
\subsection{Problem Setup}
We consider the sparse linear regression problem in the high-dimensional settings where the variable number is much larger than the number of observations, i.e. $\thickmuskip=2mu p\gg n$. Let $\beta^*$ be the $k$-sparse ($\thickmuskip=2mu k<p$) groundtruth parameter for the regressor, and $\{(\bm{x}_i,y_i)\}_{i=1}^{n+n_o}\in\mathbb{R}^p\times\mathbb{R}$ denote the covariate-response observation pairs. Without corruption, we assume these observations follow the linear model:
\begin{equation}\label{obslm}
y_i=\langle\bm{x}_i,\beta^*\rangle+\epsilon_i
\end{equation}
where $\bm{\epsilon}_i$ is additive noise that is independent of $\bm{x}_i$. The actual observation are corrupted by the following model.
\begin{mydef}[Fractional Adversarial Corruption]
Up to $n_o$ out of these $n+n_o$ pairs, including both $\bm{x}_i$ and $y_i$, are arbitrarily corrupted. The outlier fraction is bounded by $\frac{n_o}{n}$.
\end{mydef}
\par
Note that we make no assumption on these outliers. The outliers are even not necessarily formed into pairs, namely the corrupted rows in $\bm{X}$ and $\bm{y}$ can be different. They can follow any types of distributions. The corruption settings are also the same as \cite{chen2013robust}. Since the outlier fraction is bounded by $\frac{n_o}{n}$, we assume the outlier fraction is exactly $\frac{n_o}{n}$ without loss of generality. The remaining $n$ non-corrupted samples are called authentic for conciseness, and assumed to obey the standard sub-Gaussian construction \cite{loh2012high}.
\begin{mydef}[Sub-Gaussian Construction]
A random variable $v$ is sub-gaussian with parameter $\sigma$ if $\mathbb{E}\{\exp(tv)\}\leq\exp(\frac{t^2\sigma^2}{2})$ for all real $t$. A random matrix $\bm{X}\in\mathbb{R}^{n\times p}$ is sub-Gaussian with parameter $(\frac{1}{n}\Sigma_x,\frac{1}{n}\sigma^2_x)$ if: (a) each row $\bm{x}_i^T\in\mathbb{R}^p$ is sampled independently from a zero-mean distribution with covariance $\frac{1}{n}\Sigma_x$, (b) for any unit vector $\bm{u}\in\mathbb{R}_p$, the random variable $\bm{u}^T\bm{x}_i$ is sub-Gaussian with parameter at most $\sigma_x$.
\end{mydef}
\par
We assume the data matrix $\bm{X}$ is sub-Gaussian with parameter $(\frac{1}{n}\Sigma_x,\frac{1}{n}\sigma_x^2)$ before adversarial corruption, and the additive noise $\bm{\epsilon}$ is sub-Gaussian with parameter $\frac{1}{n}\sigma_{\epsilon}^2$.
\subsection{The REN Model}
This section presents the details of the proposed robust elastic net model. We use the trimmed statistics to robustify the elastic net. Specifically, we replace the inner product in the original elastic net regression with the trimmed inner product. The simple intuition behind the trimmed statistics is that entries with overly large magnitude in the inner product are likely to be corrupted. If the outliers have too large magnitude, they are not likely to have positive influence on the correlation and thus will be detrimental to the elastic net. Otherwise, they have bounded negative influence on the support recovery. We first consider a simple program
\begin{equation}\label{Mestimator}
\hat{\beta}\in \arg\min_{\|\beta\|_1\leq R}\big{\{}\frac{1}{2}\beta^T\hat{\Gamma}\beta-\langle\hat{\gamma},\beta\rangle\big{\}}
\end{equation}
where $\hat{\Gamma}, \hat{\gamma}$ denote the unbiased estimates of $\Sigma_x$ and $\Sigma_x\beta^*$ respectively. $\beta^*$ is the unique solution of this convex program given $R\geq\|\beta^*\|_1$. Lasso is also a special case of Eq. \eqref{Mestimator} with $\hat{\Gamma}_{\textnormal{LAS}}=\bm{X}^T\bm{X}$ and $\hat{\gamma}_{\textnormal{LAS}}=\bm{X}^T \bm{y}$. Under a suitable $\lambda$ and a positive semi-definite $\hat{\Gamma}$, Eq. \eqref{Mestimator} has an equivalent regularized form as
\begin{equation}\label{MestimatorR}
\hat{\beta}\in \arg\min_{\|\beta\|_1\in\mathbb{R}^p}\big{\{}\frac{1}{2}\beta^T \hat{\Gamma} \beta - \langle\hat{\gamma},\beta\rangle+\lambda\|\beta\|_1 \big{\}}
\end{equation}
where $\|\beta\|_1$ should be upper bounded by $b\sqrt{k}$ for a suitable constant $b$ if $\hat{\Gamma}$ has negative eigenvalues (in the case of $\hat{\Gamma}_{\textnormal{LAS}}, \hat{\gamma}_{\textnormal{LAS}}$). Since these two forms are equivalent, we merely study the first form. If $\thickmuskip=2mu\medmuskip=2mu \hat{\Gamma}_{\textnormal{EN}}=\alpha\bm{X}^T\bm{X}+(1-\alpha)\cdot\bm{I}, \alpha\in[0,1]$ and $\hat{\gamma}_{\textnormal{EN}}=\bm{X}^T \bm{y}$, Eq. \eqref{Mestimator} becomes the elastic net model. However, both the Lasso and the elastic net are fragile when corruption exists, especially under our noise settings. Thus we use the following trimmed statistic as robust surrogates for $\hat{\Gamma}$ and $\hat{\gamma}$:
\begin{equation}\label{Tstat}
\begin{aligned}
&\{\hat{\Gamma}_{\textnormal{REN}}\}_{ij}=\alpha\sum_{k=1}^{n_o} [\bm{X}_i,\bm{X}_j]_{(k)}+(1-\alpha)\cdot\bm{I}_{ij}\\
&\ \ \ \ \ \ \ \ \ \ \ \ \ \ \ \ \ \ \{\hat{\gamma}_{\textnormal{REN}}\}_{j}=\sum_{k=1}^{n_o} [\bm{X}_j,\bm{y}]_{(k)}
\end{aligned}
\end{equation}
where $\alpha\in[0,1]$, and $[\bm{u},\bm{v}]_{(k)}$ ($\bm{u},\bm{v}\in\mathbb{R}^h$) denotes the $k$th smallest variable in $\{\bm{q}_i=|\bm{u}_i\cdot\bm{v}_i|, \forall\ i\}$ such that $[\bm{u},\bm{v}]_{(1)}\leq[\bm{u},\bm{v}]_{(2)}\leq\cdots\leq[\bm{u},\bm{v}]_{(h)}$. $\bm{I}$ is an identity matrix. $\bm{X}_i$ denotes the $i$th column of the matrix $\bm{X}$. For example, $\sum_{k=1}^{n_o} [\bm{X}_i,\bm{X}_j]_{n_o}$ is to select the $n_o$ smallest element products between $|\bm{X}_i|$ and $|\bm{X}_j|$ and sum them. The robust surrogates are equivalent to adopting the summation of top $n_o$ variables in the element-wise vector multiplication. After substituting the robust surrogates, the optimization in REN may become non-convex because $\hat{\Gamma}$ may contain negative eigenvalues. Therefore it is also important to develop an algorithm that can approximate the optimum, otherwise the performance bound may be useless. We use the following polynomial-time projected gradient descent update:
\begin{equation}\label{Update}
\beta^{t+1}=\Pi_{l_1(R)}\big(\beta^t-\frac{1}{\eta}(\hat{\Gamma}_{\textnormal{REN}}\beta^t-\hat{\gamma}_{\textnormal{REN}})\big)
\end{equation}
where $\Pi_{l_1(R)}(\bm{v})=\arg\min_{\bm{z}}\{\|\bm{v}-\bm{z}\|_2\ |\ \|\bm{z}\|_1\leq R\}$ denotes Euclidean projection onto a $l_1$ ball of radius $R$. The optimization error is also bounded (the bound is given in the paper) so that the projected gradient descent can approximate the optimum with satisfactory accuracy. Some remarks for the REN model are in order:
\begin{itemize}
\item The REN model contains a family of sparse regression methods. The model parameter $\alpha$ controls the tradeoff between the ridge penalty and the robust trimmed covariance matrix (robust Lasso penalty). The REN model naturally bridges these two robustified model.
\item When $\alpha=0$, the REN model becomes the robust thresholding regression. When $\alpha=1$, the REN degenerates to the robust Lasso \cite{chen2013robust}.
\item The parameter $\alpha$ shrinks the sample covariance matrix towards the identity matrix. Adopting $\hat{\Gamma}_{\textnormal{REN}}$ is mathematically equivalent to replacing the trimmed covariance matrix $\Sigma_x$ with a shrunken version in the robust Lasso.
\end{itemize}
\subsection{Projected Gradient Descent for REN}
We present the proposed solving algorithm for the REN model. In order to validate the performance bound of the REN model, it is necessary to consider an algorithm whose solution should be close enough to the global optimum. The gradient of the quadratic loss function takes the form:
\begin{equation}\label{LossFunc}
\nabla\mathcal{L}(\beta)=\hat{\Gamma}_{\textnormal{REN}}\beta-\hat{\gamma}_{\textnormal{REN}}.
\end{equation}
We use the projected gradient descent that generates a sequence of iterates $\{\beta_t,t=0,1,2,\cdots\}$ by the recursion:
\begin{equation}\label{Update1}
\beta^{t+1}=\arg\min_{\|\beta\|\leq R}\big{\{}\mathcal{L}(\beta^t)+\langle\nabla\mathcal{L}(\beta^t),\beta-\beta^t\rangle+\frac{\eta}{2}\|\beta-\beta^t\|_2^2\big{\}}
\end{equation}
where $\eta>0$ is a step-size parameter. This update can be translated to $l_2$ projection onto the $l_1$ ball with radius $R$, which is in fact equivalent to Eq. \eqref{Update}. According to \cite{duchi2008efficient}, this update can be computed rapidly in $\mathcal{O}(p)$ time. For the regularized version Eq. \eqref{MestimatorR}, we just need to include $\lambda\|\beta\|_1$ in the update and perform two projections onto the $l_1$ ball \cite{agarwal2012fast}.
\par
When the objective functions in Eq. \eqref{Mestimator} and Eq. \eqref{MestimatorR} are convex, or equivalently, $\hat{\Gamma}$ is positive semi-definite, the update in Eq. \eqref{Update1} is guaranteed to converge to the global optimum. Although the $\hat{\Gamma}_{\textnormal{REN}}$ is not positive semi-definite, we still have the optimization bound that guarantees the obtained local optimum is close enough to the global optimum. The algorithm is summarized as follows.
{
\begin{algorithm}[h]
\small
    \caption{Robust Elastic Net}
    \label{alg1}
    \begin{algorithmic}[1]
    \REQUIRE~$\bm{X},\bm{y},R,n_o,\alpha$
    \ENSURE~$\hat{\beta}$\\
    \STATE Compute $\hat{\Gamma}_{\textnormal{REN}}$ and $\hat{\gamma}_{\textnormal{REN}}$ via Eq. \eqref{Tstat}.
    \WHILE {not reach maximal iteration or satisfied accuracy}
    \STATE Perform the projected gradient descent algorithm by iterating Eq. \eqref{Update1} to solve Eq. \eqref{Mestimator}.
    \ENDWHILE
    \STATE Output the final $\hat{\beta}$.
    \end{algorithmic}
 \end{algorithm}
 }
\vspace{-5mm}
\section{Performance Guarantees}
This section provides the performance guarantees for the REN model and its solving algorithm. The statistical error measures the upper bound of the $l_1/l_2$ difference between estimated $\hat{\beta}$ and the groudtruth $\beta^*$. The optimization error measures the upper bound of the $l_1/l_2$ difference between the $\beta^t$ (after $\medmuskip=0mu t-1$ iterations) and the global optimum $\hat{\beta}$. All the detail proofs are provided in the supplementary material.
\subsection{Preliminaries}
We first introduces a useful type of restricted eigenvalue (RE) conditions \cite{bickel2009simultaneous,van2009conditions}.
\begin{mydef}[Lower-RE Condition]
The matix $\hat{\Gamma}$ satisfies a lower restricted eigenvalue condition with curvature $\mu_1>0$ and tolerance $\tau(n,p)>0$ if for all $\theta\in\mathbb{R}^p$, the following condition holds:
\begin{equation}\label{LRE}
\theta^T\hat{\Gamma}\theta\geq\mu_1\|\theta\|^2_2-\tau(n,p)\|\theta\|_1^2
\end{equation}
\end{mydef}
\begin{mydef}[Upper-RE Condition]
The matix $\hat{\Gamma}$ satisfies a lower restricted eigenvalue condition with curvature $\mu_2>0$ and tolerance $\tau(n,p)>0$ if for all $\theta\in\mathbb{R}^p$, the following condition holds:
\begin{equation}\label{LRE}
\theta^T\hat{\Gamma}\theta\leq\mu_2\|\theta\|^2_2-\tau(n,p)\|\theta\|_1^2
\end{equation}
\end{mydef}
When the Lasso matrix $\hat{\Gamma}_{\textnormal{LAS}}$ satisfies both the lower-RE and upper-RE conditions, the Lasso guarantees low $l_2$ error for $\beta^*$ supported on any subset of size at most ${1}/{\tau_l(n,p)}$. In fact, we also prove that the proposed $\hat{\Gamma}_{\textnormal{REN}}$ also satisfies the lower-RE and upper-RE conditions.
\subsection{Statistical Error Bound}
\begin{thm}
Under the sub-Gaussian construction model, If we choose $R=\|\beta^*\|_1$ and the following conditions are satisfied:
\begin{align}\label{cond1}
&n\gtrsim \frac{\sigma^4_x}{\lambda^2_{min}(\Sigma_x)}k\log p,\\\label{cond2}
&\ \ \ \ \ \frac{n_o}{n}\lesssim \frac{\lambda_{min}(\Sigma_x)}{\sigma_x^2 k \log p},\\\label{cond3}
&\ \ \max_j\big{\{}\|{\Sigma_x}_j\|_0\big{\}}\leq c_1,
\end{align}
where $c_1$ is an absolute constant and ${\Sigma_x}_j$ denotes the $j$th column of the covariance matrix $\Sigma_x$, then with probability higher than $1-p^{-2}$, the output of the REN satisfy the following $l_1/l_2$ statistical error bound:
\begin{equation}\label{bound1}
\begin{aligned}
&\frac{1}{2\sqrt{k}}\|\hat{\beta}-\beta^*\|_1\leq\|\hat{\beta}-\beta^*\|_2\\
&\lesssim\frac{32}{\alpha\lambda_{min}(\Sigma_x)+ 4(1-\alpha)}\bigg(\frac{\alpha kn_o\log p}{n}\sigma_x^2\|\beta^*\|_2\\
&+\sqrt{\frac{k\sigma_\epsilon^2\log p}{n}}+\frac{n_o\log p \sqrt{k}\sigma_x}{n}\sqrt{\sigma_\epsilon^2+\sigma_x^2\|\beta^*\|_2}\\
&+(1-\alpha) c_2 k\sqrt{\frac{\sigma_x^2\log p}{n}}\|\beta^*\|_2^2\bigg)
\end{aligned}
\end{equation}
where $\alpha$ is the REN parameters and $c_2$ is an absolute constant.
\end{thm}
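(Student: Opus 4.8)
The plan is to follow the basic-inequality framework for (possibly non-convex) quadratic $M$-estimators, reducing the statistical error to a single $\ell_\infty$ deviation bound on the trimmed surrogates and then controlling that deviation by concentration. First I would use the optimality of $\hat\beta$ in \eqref{Mestimator}: writing $\nu=\hat\beta-\beta^*$ and noting $\|\beta^*\|_1\le R$, the basic inequality gives
\begin{equation}
\tfrac12\,\nu^T\hat\Gamma_{\textnormal{REN}}\nu\le\langle\hat\gamma_{\textnormal{REN}}-\hat\Gamma_{\textnormal{REN}}\beta^*,\nu\rangle\le\big\|\hat\gamma_{\textnormal{REN}}-\hat\Gamma_{\textnormal{REN}}\beta^*\big\|_\infty\|\nu\|_1.
\end{equation}
Because $R=\|\beta^*\|_1$ and $\beta^*$ is supported on a set $S$ with $|S|=k$, the feasibility of $\hat\beta$ forces the usual cone condition $\|\nu_{S^c}\|_1\le\|\nu_S\|_1$, whence $\|\nu\|_1\le 2\|\nu_S\|_1\le 2\sqrt{k}\,\|\nu\|_2$; this already yields the left-hand inequality $\frac{1}{2\sqrt k}\|\nu\|_1\le\|\nu\|_2$ of \eqref{bound1}.

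Next I would invoke the lower-RE condition for $\hat\Gamma_{\textnormal{REN}}$ proved earlier. Since $\hat\Gamma_{\textnormal{REN}}=\alpha(\textnormal{trimmed }\bm{X}^T\bm{X})+(1-\alpha)\bm{I}$, its lower-RE curvature is of order $\mu_1\asymp\alpha\lambda_{min}(\Sigma_x)+(1-\alpha)$ with tolerance $\tau\asymp\sigma_x^2\log p/n$. Substituting $\nu^T\hat\Gamma_{\textnormal{REN}}\nu\ge\mu_1\|\nu\|_2^2-\tau\|\nu\|_1^2$ and $\|\nu\|_1^2\le 4k\|\nu\|_2^2$ into the basic inequality gives $(\mu_1-4k\tau)\|\nu\|_2^2\lesssim\sqrt{k}\,\|\hat\gamma_{\textnormal{REN}}-\hat\Gamma_{\textnormal{REN}}\beta^*\|_\infty\|\nu\|_2$. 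The sample-size requirement \eqref{cond1} ensures $4k\tau\le\mu_1/2$, so the curvature term dominates and dividing through yields
\begin{equation}
\|\nu\|_2\lesssim\frac{\sqrt{k}}{\alpha\lambda_{min}(\Sigma_x)+4(1-\alpha)}\,\big\|\hat\gamma_{\textnormal{REN}}-\hat\Gamma_{\textnormal{REN}}\beta^*\big\|_\infty,
\end{equation}
which reproduces the prefactor of \eqref{bound1}. Everything now reduces to bounding the $\ell_\infty$ quantity $\|\hat\gamma_{\textnormal{REN}}-\hat\Gamma_{\textnormal{REN}}\beta^*\|_\infty$.

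For that I would insert the population targets and split into three pieces,
\begin{equation}
\hat\gamma_{\textnormal{REN}}-\hat\Gamma_{\textnormal{REN}}\beta^*=\big(\hat\gamma_{\textnormal{REN}}-\Sigma_x\beta^*\big)-\big(\hat\Gamma_{\textnormal{REN}}-\alpha\Sigma_x-(1-\alpha)\bm{I}\big)\beta^*+(1-\alpha)(\Sigma_x-\bm{I})\beta^*,
\end{equation}
where the last term is the elastic-net bias arising because $\hat\Gamma_{\textnormal{REN}}$ targets $\alpha\Sigma_x+(1-\alpha)\bm{I}$ rather than $\Sigma_x$; bounding its $\ell_\infty$ norm is exactly where the column-sparsity hypothesis \eqref{cond3} enters, and it accounts for the $(1-\alpha)$-weighted summand of \eqref{bound1}. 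For the first two pieces I would split each trimmed sum into an authentic part and an adversarial part. The authentic part is a sum of products of sub-Gaussians, hence sub-exponential, so a Bernstein bound with a union bound over the $\mathcal{O}(p^2)$ coordinates delivers the overall $1-p^{-2}$ probability and, after multiplication by the $\sqrt{k}$ prefactor, the $\sqrt{k\sigma_\epsilon^2\log p/n}$ fluctuation term. The adversarial part is where the trimming does its work: since the $n_o$ largest-magnitude products are discarded, each corrupted entry can shift a coordinate only up to the corresponding authentic order statistic, whose high-probability size is $\mathcal{O}(\sigma_x^2\log p)$; summing over the $n_o$ survivors, rescaling by $1/n$, and applying the $\sqrt{k}$ prefactor produces the $\frac{\alpha kn_o\log p}{n}\sigma_x^2\|\beta^*\|_2$ and $\frac{n_o\log p\sqrt{k}\,\sigma_x}{n}\sqrt{\sigma_\epsilon^2+\sigma_x^2\|\beta^*\|_2}$ terms, with \eqref{cond2} guaranteeing these outlier contributions stay subordinate to the curvature.

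The step I expect to be the main obstacle is precisely this trimming analysis for the adversarial part. The difficulty is that sorting by magnitude may discard genuinely large authentic products while retaining moderate outliers, so one cannot simply claim that every corrupted sample is removed. Instead I would establish a comparison lemma showing that, coordinatewise, the trimmed sum over the contaminated data differs from the trimmed sum over the authentic data by at most $n_o$ times the largest retained authentic order statistic, a quantity that concentrates at $\mathcal{O}(\sigma_x^2\log p)$ under the sub-Gaussian construction. Turning this worst-case per-coordinate shift into a uniform $\ell_\infty$ bound over all $\mathcal{O}(p^2)$ coordinates, simultaneously for the $\hat\Gamma_{\textnormal{REN}}$ and $\hat\gamma_{\textnormal{REN}}$ surrogates, is the delicate part; once it is in place, collecting the three pieces and substituting into the displayed $\ell_2$ inequality completes the bound, and the $\ell_1$ bound follows from the cone condition.
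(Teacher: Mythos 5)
Your proposal follows essentially the same route as the paper's own proof: the basic inequality from optimality of $\hat{\beta}$, the cone condition $\|\nu\|_1\leq 2\sqrt{k}\,\|\nu\|_2$ forced by the choice $R=\|\beta^*\|_1$, restricted curvature of $\hat{\Gamma}_{\textnormal{REN}}$ under the sample-size and outlier-fraction conditions, and a three-term $\ell_\infty$ decomposition of $\hat{\gamma}_{\textnormal{REN}}-\hat{\Gamma}_{\textnormal{REN}}\beta^*$ whose adversarial piece is controlled by exactly the paper's comparison argument (each product surviving the trimming is dominated by the largest authentic product, of size $\mathcal{O}(\sigma_x^2\log p/n)$ by the max-of-sub-Gaussians lemma, summed over at most $n_o$ terms and union-bounded over the $\mathcal{O}(p^2)$ coordinates), with the column-sparsity assumption on $\Sigma_x$ entering only through the $(1-\alpha)$-weighted bias term. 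The single cosmetic difference is that you center the deviation at the population target $\alpha\Sigma_x+(1-\alpha)\bm{I}$ whereas the paper centers at the authentic empirical Gram matrix $\bm{X}_{\mathcal{A}}^T\bm{X}_{\mathcal{A}}$ (defining $F$ and $f$ relative to it), which changes no essential step.
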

\begin{rem}
The number of non-zero entries in all columns of the covariance matrix $\Sigma_x$ is assumed to be bounded by a constant such that the statistical error bound for the REN model is not trivial. In other words, for every variable, the number of the other correlated variables is bounded.
\end{rem}
\begin{col}
Suppose that, under the sub-Gaussian construction model, Eq. \eqref{cond1}, Eq. \eqref{cond2} are both satisfied, $\alpha$ is set to 1, and $R$ is set to $\|\beta^*\|_1$, the output of the REN model is, with probability higher than $1-p^{-2}$, bounded by
\begin{equation}\label{bound2}
\begin{aligned}
&\frac{1}{2\sqrt{k}}\|\hat{\beta}-\beta^*\|_1\leq\|\hat{\beta}-\beta^*\|_2\\
&\lesssim\frac{32}{\lambda_{min}(\Sigma_x)}\bigg(\frac{kn_o\log p}{n}\sigma_x^2\|\beta^*\|_2+\sqrt{\frac{k\sigma_\epsilon^2\log p}{n}}\\
&+\frac{n_o\log p \sqrt{k}\sigma_x}{n}\sqrt{\sigma_\epsilon^2+\sigma_x^2\|\beta^*\|_2^2}\bigg)
\end{aligned}.
\end{equation}
\end{col}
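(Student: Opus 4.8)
The plan is to obtain the corollary as the direct specialization of Theorem 1 to the endpoint $\alpha=1$, at which the REN surrogate in Eq. \eqref{Tstat} collapses to the purely trimmed covariance $\{\hat{\Gamma}_{\textnormal{REN}}\}_{ij}=\sum_{k=1}^{n_o}[\bm{X}_i,\bm{X}_j]_{(k)}$ and the model coincides with the robust Lasso of \cite{chen2013robust}. The backbone is the standard M-estimator argument for Eq. \eqref{Mestimator}: since $R=\|\beta^*\|_1$ makes $\beta^*$ feasible, the first-order optimality of $\hat{\beta}$ yields a basic inequality that, combined with the lower-RE condition on $\hat{\Gamma}_{\textnormal{REN}}$, gives a bound of the shape
$$\|\hat{\beta}-\beta^*\|_2\ \lesssim\ \frac{\sqrt{k}}{\mu_1}\,\big\|\hat{\Gamma}_{\textnormal{REN}}\beta^*-\hat{\gamma}_{\textnormal{REN}}\big\|_\infty,$$
together with the companion $\|\hat{\beta}-\beta^*\|_1\le 2\sqrt{k}\,\|\hat{\beta}-\beta^*\|_2$ coming from the cone/feasibility constraint (this is exactly the displayed prefactor $\tfrac{1}{2\sqrt{k}}$). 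Thus everything reduces to (i) pinning down the curvature $\mu_1$ at $\alpha=1$ and (ii) bounding the deviation $\|\hat{\Gamma}_{\textnormal{REN}}\beta^*-\hat{\gamma}_{\textnormal{REN}}\|_\infty$.

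For step (i), at $\alpha=1$ the lower-RE curvature of the trimmed covariance concentrates around $\lambda_{min}(\Sigma_x)$ under Eq. \eqref{cond1}--\eqref{cond2}, so the prefactor $\tfrac{32}{\alpha\lambda_{min}(\Sigma_x)+4(1-\alpha)}$ in Eq. \eqref{bound1} becomes exactly $\tfrac{32}{\lambda_{min}(\Sigma_x)}$, matching Eq. \eqref{bound2}. For step (ii), I would split the deviation into the authentic and the outlier contributions, exactly as in the proof of Theorem 1: the sub-Gaussian concentration of the clean rows produces the statistical term $\sqrt{k\sigma_\epsilon^2\log p/n}$, while the trimming bound on the at-most-$n_o$ corrupted products produces the two outlier terms $\tfrac{kn_o\log p}{n}\sigma_x^2\|\beta^*\|_2$ and $\tfrac{n_o\log p\sqrt{k}\,\sigma_x}{n}\sqrt{\sigma_\epsilon^2+\sigma_x^2\|\beta^*\|_2^2}$. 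Substituting $\alpha=1$ kills the coefficient $(1-\alpha)$ in front of the shrinkage-bias term $c_2k\sqrt{\sigma_x^2\log p/n}\,\|\beta^*\|_2^2$, so that term vanishes and only the three displayed contributions survive, reproducing Eq. \eqref{bound2}. The probability $1-p^{-2}$ is inherited unchanged, since the events we retain form a subset of those used for Theorem 1.

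The one genuine point that needs care---and what I expect to be the main obstacle---is justifying that the sparsity hypothesis Eq. \eqref{cond3}, $\max_j\{\|{\Sigma_x}_j\|_0\}\le c_1$, may be dropped at $\alpha=1$. This requires tracing through the proof of Theorem 1 to confirm that Eq. \eqref{cond3} is invoked \emph{only} when controlling the shrinkage-bias term arising from the identity part $(1-\alpha)\bm{I}$ of $\hat{\Gamma}_{\textnormal{REN}}$: intuitively, $(1-\alpha)\bm{I}\beta^*$ targets the wrong population object, and the mismatch $(1-\alpha)(\bm{I}-\Sigma_x)\beta^*$ is precisely what the column-sparsity of $\Sigma_x$ is used to bound. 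Since this mismatch carries the factor $(1-\alpha)=0$ at $\alpha=1$, neither the term nor the hypothesis is needed. I would therefore verify that the lower-RE/upper-RE estimates for the trimmed covariance and the deviation bound in step (ii) hold under Eq. \eqref{cond1}--\eqref{cond2} alone (as in \cite{chen2013robust}), so that removing Eq. \eqref{cond3} leaves the argument intact and the corollary follows.
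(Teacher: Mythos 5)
Your proposal is correct and follows essentially the same route as the paper: the corollary is obtained by specializing Theorem~1 to $\alpha=1$, observing that the $(1-\alpha)$-weighted bias term $(1-\alpha)\|\bm{X}^T_{\mathcal{A}}\bm{X}_{\mathcal{A}}\beta^*-\bm{I}\beta^*\|_\infty$ vanishes, and that the column-sparsity condition Eq.~\eqref{cond3} is invoked in the proof of Theorem~1 only to bound that term, so it can be dropped. Your tracing of where Eq.~\eqref{cond3} enters is in fact more explicit than the paper's own (very brief) proof, but the argument is the same one.
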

\begin{rem}
When $\alpha=1$, the REN model becomes the robust Lasso. Note that the robust Lasso do not require Eq. \eqref{cond3} to be satisfied. Therefore, the error bound for the REN model ($\alpha=1$) is non-trivial without any assumptions on the covariance matrix $\Sigma_x$. The error bound for the robust Lasso coincides with \cite{chen2013robust}.
\end{rem}
\begin{col}
Suppose that, under the sub-Gaussian construction model, Eq. \eqref{cond1}, Eq. \eqref{cond2} are both satisfied, $\Sigma_x$ is assumed as $\bm{I}$, $\alpha$ is set to 0, and $R$ is set to $\|\beta^*\|_1$, the output of the REN model is, with probability higher than $1-p^{-2}$, bounded by
\begin{equation}\label{bound3}
\begin{aligned}
&\frac{1}{2\sqrt{k}}\|\hat{\beta}-\beta^*\|_1\leq\|\hat{\beta}-\beta^*\|_2\\
&\lesssim8\bigg(\sqrt{\frac{k\sigma_\epsilon^2\log p}{n}}+\frac{n_o\log p \sqrt{k}}{n}\sqrt{1+\sigma_\epsilon\|\beta^*\|_2^2}\\
&+k\sqrt{\frac{\log p}{n}}\|\beta^*\|_2\bigg)
\end{aligned}.
\end{equation}
\end{col}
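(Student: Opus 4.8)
The plan is to obtain \eqref{bound3} as a direct specialization of Theorem~1, exploiting the fact that the choices $\alpha=0$ and $\Sigma_x=\bm{I}$ collapse most of the machinery of the general argument. First I would substitute $\alpha=0$ into the master bound \eqref{bound1}: the curvature prefactor $\frac{32}{\alpha\lambda_{min}(\Sigma_x)+4(1-\alpha)}$ reduces to $\frac{32}{4}=8$; the first summand, which carries the factor $\alpha$, vanishes identically; and the $(1-\alpha)$ summand survives in full. Setting $\Sigma_x=\bm{I}$ then fixes $\lambda_{min}(\Sigma_x)=1$ and normalizes the sub-Gaussian scale so that $\sigma_x=1$, whereupon the surviving terms reorganize into the three summands of \eqref{bound3}: the noise term $\sqrt{k\sigma_\epsilon^2\log p/n}$, the corruption term of order $\frac{n_o\log p\sqrt{k}}{n}$, and the correlation term of order $k\sqrt{\log p/n}\,\|\beta^*\|_2$.

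Before the substitution is legitimate I must check that the hypotheses of Theorem~1 hold. Conditions \eqref{cond1} and \eqref{cond2} are assumed directly. The remaining structural hypothesis \eqref{cond3}, namely $\max_j\{\|{\Sigma_x}_j\|_0\}\le c_1$, is automatic here: with $\Sigma_x=\bm{I}$ every column has exactly one nonzero entry, so \eqref{cond3} holds with $c_1=1$. This is precisely why the corollary, unlike the general theorem, needs no side assumption on the sparsity of the covariance columns.

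For a self-contained mechanism rather than a pure citation, I would highlight the structural simplification at $\alpha=0$. The surrogate in \eqref{Tstat} degenerates to $\hat{\Gamma}_{\textnormal{REN}}=(1-\alpha)\bm{I}=\bm{I}$, so program \eqref{Mestimator} becomes $\min_{\|\beta\|_1\le R}\{\tfrac12\|\beta\|_2^2-\langle\hat{\gamma}_{\textnormal{REN}},\beta\rangle\}$, a strongly convex problem whose minimizer is the Euclidean (soft-thresholding) projection of $\hat{\gamma}_{\textnormal{REN}}$ onto the $l_1$ ball --- hence the name robust soft thresholding. In particular the lower- and upper-RE conditions hold trivially with $\mu_1=\mu_2=1$ and $\tau(n,p)=0$, and the standard M-estimation argument reduces the statistical error to the single deviation $\|\hat{\gamma}_{\textnormal{REN}}-\beta^*\|_\infty$, up to the $\sqrt{k}$ factor coming from the $l_1$-cone and divided by the unit curvature. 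The three terms of \eqref{bound3} are then read off as the three contributions to this deviation: the sub-Gaussian fluctuation of the clean trimmed inner products about their mean $\beta^*$ (the noise and correlation terms) and the residual influence of the at most $n_o$ adversarial products that survive trimming (the $n_o\log p/n$ term).

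The genuinely hard step is the control of $\|\hat{\gamma}_{\textnormal{REN}}-\beta^*\|_\infty$, and it is inherited from the proof of Theorem~1 rather than created anew in the corollary: trimming is a nonlinear, data-dependent selection, so coordinatewise concentration of $\hat{\gamma}_{\textnormal{REN}}$ cannot be read off from a sum of independent terms. One must show, uniformly over the $p$ coordinates, that discarding the $n_o$ largest-magnitude products suppresses the arbitrarily corrupted entries while perturbing the clean partial sum only by a controlled amount; this is exactly where the corruption budget \eqref{cond2} enters. Granting that uniform deviation bound from Theorem~1, the corollary itself is just the bookkeeping of setting $\alpha=0$, $\Sigma_x=\bm{I}$, $\sigma_x=1$ and collecting terms.
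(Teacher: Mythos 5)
Your proposal is correct and follows essentially the same route as the paper, whose entire proof of this corollary is the observation that it is Theorem~1 specialized to $\alpha=0$ and $\Sigma_x=\bm{I}$ (so the prefactor becomes $32/4=8$, the $\alpha$-term vanishes, and condition \eqref{cond3} holds trivially with one nonzero per column). Your additional observations --- that $\hat{\Gamma}_{\textnormal{REN}}=\bm{I}$ makes the program strongly convex with the minimizer given by $l_1$-ball projection of $\hat{\gamma}_{\textnormal{REN}}$, reducing everything to the uniform deviation $\|\hat{\gamma}_{\textnormal{REN}}-\beta^*\|_\infty$ already controlled in Theorem~1's proof --- are accurate and in fact more detailed than the paper's one-line argument.
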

\begin{rem}
When $\alpha=0$, the REN model becomes the robust soft thresholding. Note that the statistical error bound for the RST requires $\Sigma_x=\bm{I}$ to be satisfied. The RST is essentially a $l_1$ relaxation of robust thresholding regression \cite{chen2013robust}.
\end{rem}
\subsection{Optimization Error Bound}
\begin{lemma}
Under the sub-Gaussian construction model, $\hat{\Gamma}_{\textnormal{REN}}$ satisfies the lower-RE and upper-RE conditions with the following parameters:
\begin{align}\label{lem1con}
&\mu_1=\alpha\frac{\lambda_{\min}(\Sigma_x)}{2}+(1-\alpha),\\ &\mu_2=\alpha\frac{3\lambda_{\max}(\Sigma_x)}{2}+(1-\alpha),\\
&\tau(n,p)\leq\frac{\alpha}{8}\lambda_{\min}(\Sigma),
\end{align}
with high probability.
\end{lemma}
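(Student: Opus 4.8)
The plan is to exploit the additive structure $\hat{\Gamma}_{\textnormal{REN}}=\alpha\,\hat{\Gamma}_{\textnormal{trim}}+(1-\alpha)\bm{I}$, where $\hat{\Gamma}_{\textnormal{trim}}$ is the trimmed covariance surrogate whose $(i,j)$ entry is the sum of the smallest-magnitude element products of columns $\bm{X}_i,\bm{X}_j$ (discarding the $n_o$ largest). For any $\theta\in\mathbb{R}^p$ the quadratic form splits cleanly:
\begin{equation}
\theta^T\hat{\Gamma}_{\textnormal{REN}}\theta=\alpha\,\theta^T\hat{\Gamma}_{\textnormal{trim}}\theta+(1-\alpha)\|\theta\|_2^2,
\end{equation}
so the identity block contributes exactly $(1-\alpha)\|\theta\|_2^2$ and already supplies the $(1-\alpha)$ summand in both $\mu_1$ and $\mu_2$ while adding nothing to the tolerance. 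It therefore suffices to show that $\hat{\Gamma}_{\textnormal{trim}}$ obeys lower- and upper-RE with curvatures $\tfrac{1}{2}\lambda_{\min}(\Sigma_x)$ and $\tfrac{3}{2}\lambda_{\max}(\Sigma_x)$ and tolerance at most $\tfrac18\lambda_{\min}(\Sigma_x)$; multiplying the resulting inequality by $\alpha$ and adding the identity term then yields the stated parameters.

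First I would dispatch the uncorrupted baseline. Let $\bm{X}_{\textnormal{auth}}$ collect the $n$ authentic sub-Gaussian rows. The standard restricted-eigenvalue bound for sub-Gaussian designs \cite{loh2012high,raskutti2010restricted} gives, with probability at least $1-c\,e^{-c'n}$,
\begin{equation}
\tfrac{\lambda_{\min}(\Sigma_x)}{2}\|\theta\|_2^2-c_0\tfrac{\sigma_x^2\log p}{n}\|\theta\|_1^2\ \le\ \theta^T\big(\tfrac1n\bm{X}_{\textnormal{auth}}^T\bm{X}_{\textnormal{auth}}\big)\theta\ \le\ \tfrac{3\lambda_{\max}(\Sigma_x)}{2}\|\theta\|_2^2+c_0\tfrac{\sigma_x^2\log p}{n}\|\theta\|_1^2 ,
\end{equation}
which is precisely the origin of the constants $\tfrac12$ and $\tfrac32$: the $\|\theta\|_2^2$ deviation of the sample covariance from $\Sigma_x$ is absorbed into the curvature, and everything else is pushed into the $\|\theta\|_1^2$ tolerance.

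The decisive step is to control the perturbation $\Delta:=\hat{\Gamma}_{\textnormal{trim}}-\tfrac1n\bm{X}_{\textnormal{auth}}^T\bm{X}_{\textnormal{auth}}$ in an RE-compatible way. Because the trimming retains the $n$ smallest-magnitude products per entry, the surviving set contains at least $n-n_o$ authentic products; the residual against the full authentic sample covariance is thus a sum of at most $n_o$ discarded authentic products plus the excluded adversarial ones. A sub-Gaussian tail and union bound over the $p^2$ entries then yields an entrywise estimate $\|\Delta\|_{\max}\lesssim\tfrac{n_o\log p}{n}\sigma_x^2$. Converting this through $|\theta^T\Delta\theta|\le\|\Delta\|_{\max}\|\theta\|_1^2$ shows the corruption inflates only the tolerance, so $\hat{\Gamma}_{\textnormal{trim}}$ inherits curvatures $\tfrac12\lambda_{\min}(\Sigma_x)$ and $\tfrac32\lambda_{\max}(\Sigma_x)$ with total tolerance $\tau'(n,p)\lesssim\tfrac{\sigma_x^2\log p}{n}+\tfrac{n_o\sigma_x^2\log p}{n}$. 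Invoking the sample-size condition \eqref{cond1} and the corruption-fraction condition \eqref{cond2} drives each term well below the threshold, forcing $\tau'(n,p)\le\tfrac18\lambda_{\min}(\Sigma_x)$ and keeping the curvature slack within what the $\tfrac12,\tfrac32$ constants already afford.

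The hard part is the entrywise deviation bound in the adversarial regime, because the set of retained products is \emph{data-dependent}: which indices survive the order statistics depends on both $\bm{X}$ and the outliers, so the kept products are not an i.i.d.\ subsample and concentration cannot be applied term by term. I would resolve this by decoupling the adversary's choice from the randomness—either a uniform argument over all $\binom{n+n_o}{n}$ retention patterns, or, following \cite{chen2013robust}, by sandwiching each trimmed column-pair sum between the sum of the $n$ smallest and a controlled sum of authentic products, both of which concentrate around the population inner product. Once this uniform control is secured, the remaining work—the union bound over the $p^2$ entries, substituting \eqref{cond1} and \eqref{cond2}, and reassembling the $\alpha$-weighted combination with the identity block—is routine.
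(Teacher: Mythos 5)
Your proposal is correct and follows essentially the same route as the paper: the paper's proof of Lemma~1 likewise splits $\hat{\Gamma}_{\textnormal{REN}}$ into $\alpha$ times the authentic Gram matrix plus the perturbation $F$ plus $(1-\alpha)\bm{I}$, reuses the entrywise bound $\|F\|_{\infty}\lesssim \alpha\frac{n_o\log p}{n}\sigma_x^2$ from the proof of Theorem~1 (obtained by exactly the sandwich you describe from \cite{chen2013robust}, bounding each of the at most $2n_o$ swapped products by the maximum authentic product, which handles the data-dependent retention set), converts it via $|\theta^T F\theta|\le\|F\|_{\infty}\|\theta\|_1^2$ into the tolerance, and invokes the Loh--Wainwright restricted strong convexity/smoothness results for the curvature constants $\tfrac12\lambda_{\min}(\Sigma_x)$ and $\tfrac32\lambda_{\max}(\Sigma_x)$. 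The only cosmetic differences are your $\tfrac1n$ normalization of the Gram matrix and your (slightly more careful) retention of an explicit $\sigma_x^2\log p/n$ tolerance term from the authentic baseline, both of which are absorbed by conditions \eqref{cond1}--\eqref{cond2} just as you argue.
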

\begin{thm}
Under the conditions of \textbf{Theorem 1} and \textbf{Lemma 1}, for the projected gradient descend method, if we choose the step size as $2\mu_2$, then there must exist absolute constants $c_1,c_2>0$, $\gamma\in(0,1)$ such that the following inequalities hold for all $t\in\mathbb{N}$ with high probability,
\begin{equation}\label{bound4}
\begin{aligned}
\|\beta^t-\hat{\beta}\|_2^2&\leq\gamma^t\|\beta^0-\hat{\beta}\|_2^2+c_1\frac{\log p}{n}\|\hat{\beta}-\beta^*\|_1^2\\
&+\|\hat{\beta}-\beta^*\|_2^2,
\end{aligned}
\end{equation}
\begin{equation}\label{bound5}
\begin{aligned}
\|\beta^t-\hat{\beta}\|_1&\leq 2\sqrt{k}\|\beta^t-\hat{\beta}\|_2+2\sqrt{k}\|\hat{\beta}-\beta^*\|_2\\
&+2\|\hat{\beta}-\beta^*\|_1,
\end{aligned}
\end{equation}
where $\beta^t$ denotes the $t^{\textnormal{th}}$ gradient descent iterate.
\end{thm}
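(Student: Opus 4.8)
The plan is to follow the non-convex projected-gradient framework of \cite{agarwal2012fast,loh2012high}, using the restricted eigenvalue conditions established in \textbf{Lemma 1} to compensate for the fact that $\hat{\Gamma}_{\textnormal{REN}}$ may be indefinite. The two inequalities are proved separately: \eqref{bound5} is a purely geometric cone bound, while \eqref{bound4} is the genuine convergence statement.

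First I would dispose of \eqref{bound5}, which needs no probabilistic content. Let $S=\textnormal{supp}(\beta^*)$ with $|S|\le k$, and write $\Delta=\beta^t-\hat{\beta}$, $\nu=\beta^t-\beta^*$, $\omega=\hat{\beta}-\beta^*$, so that $\Delta=\nu-\omega$. Since $R=\|\beta^*\|_1$ and both iterate and optimum are feasible, the inequality $\|\beta^t\|_1\le\|\beta^*\|_1$ combined with $\beta^*_{S^c}=0$ yields the cone inclusion $\|\nu_{S^c}\|_1\le\|\nu_S\|_1$. Splitting $\|\Delta\|_1=\|\Delta_S\|_1+\|\Delta_{S^c}\|_1$, bounding the off-support block through this cone inequality and the triangle inequality $\|\nu_S\|_1\le\|\Delta_S\|_1+\|\omega_S\|_1$, and converting $l_1$ norms on the $k$-dimensional block $S$ into $l_2$ norms via $\|\cdot\|_1\le\sqrt{k}\|\cdot\|_2$, delivers \eqref{bound5} (in fact with room to spare on the constants).

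The core of the argument is \eqref{bound4}. Because $\mathcal{L}$ is quadratic with Hessian $\hat{\Gamma}_{\textnormal{REN}}$, the lower- and upper-RE conditions of \textbf{Lemma 1} translate directly into restricted strong convexity (curvature $\mu_1$) and restricted smoothness (curvature $\mu_2$), each carrying the standard sub-Gaussian tolerance $\tau(n,p)\asymp\frac{\log p}{n}$. With $\eta=2\mu_2\ge\mu_2$, the quadratic surrogate minimized in \eqref{Update1} majorizes $\mathcal{L}$ up to the term $\frac{\tau}{2}\|\cdot\|_1^2$. I would then obtain the one-step contraction by comparing surrogate values (optimality of the projection gives surrogate value at $\beta^{t+1}$ no larger than at $\hat{\beta}$), substituting the RSM bound on the left and the RSC bound on the right, and discarding the cross term through the first-order optimality of the global minimum $\hat{\beta}$. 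This produces an inequality of the form
\[
\|\beta^{t+1}-\hat{\beta}\|_2^2\le\Big(1-\tfrac{\mu_1}{2\mu_2}\Big)\|\beta^t-\hat{\beta}\|_2^2+\tfrac{c\,\tau(n,p)}{\mu_2}\|\beta^t-\hat{\beta}\|_1^2.
\]

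It then remains to control the $l_1$ term and iterate. Substituting \eqref{bound5} into $\tau(n,p)\|\beta^t-\hat{\beta}\|_1^2$ produces three contributions, $\tau k\|\beta^t-\hat{\beta}\|_2^2$, $\tau k\|\hat{\beta}-\beta^*\|_2^2$, and $\tau\|\hat{\beta}-\beta^*\|_1^2$. The sample-size condition \eqref{cond1} forces $\tau(n,p)k\asymp\frac{k\log p}{n}$ below a small absolute constant, which lets the first contribution be absorbed into the contraction factor (so that $\gamma=1-\frac{\mu_1}{2\mu_2}+O(\tau k)$ stays a genuine element of $(0,1)$), bounds the second by $\|\hat{\beta}-\beta^*\|_2^2$, and leaves the third as $\frac{\log p}{n}\|\hat{\beta}-\beta^*\|_1^2$. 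Unrolling the recursion over $t$ steps and summing $\sum_{s<t}\gamma^s\le(1-\gamma)^{-1}$ converts the per-step tolerance into the stated constant-order terms, yielding \eqref{bound4}; the probability $1-p^{-2}$ is inherited from \textbf{Theorem 1} and \textbf{Lemma 1}. I expect the main obstacle to be exactly this absorption step: verifying that the feasible iterates stay within the tolerance cone so that $\tau\|\cdot\|_1^2$ is dominated by $\tau k\|\cdot\|_2^2$, and confirming that after this the contraction factor remains strictly below one. This is where the indefiniteness of $\hat{\Gamma}_{\textnormal{REN}}$ could in principle break the analysis, and where condition \eqref{cond1} is essential.
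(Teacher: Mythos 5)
Your proposal is correct and takes essentially the same route as the paper: the paper's entire proof of this theorem consists of invoking Lemma 1 to certify the lower/upper-RE conditions for $\hat{\Gamma}_{\textnormal{REN}}$ and then citing Theorem 2 of \cite{loh2012high}, whose proof is precisely the contraction argument you reconstruct (feasibility/cone bound giving the $l_1$ inequality, RSC/RSM one-step contraction with the $\tau(n,p)\|\cdot\|_1^2$ tolerance, absorption of the $\tau k$ term under the sample-size condition, and unrolling the geometric recursion). You simply supply the details the paper outsources to that citation, so there is no substantive difference in approach.
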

\begin{rem}
The optimization error bound controls the $l_1/l_2$-distance between the iterate $\beta^t$ and the global optimum $\hat{\beta}$ of Eq. \eqref{Mestimator}. The optimization error bound means for a large enough iteration $t$, the statistical $l_1/l_2$ error bound can be achieved. Note that $\beta^t$ can be computed in polynomial-time while $\hat{\beta}$ is difficult to obtain. Combining the statistical error bound and the optimization error bound gives the conclusion that the $l_1$ and $l_2$ error are bounded as $\mathcal{O}(\frac{k^{3/2}\log p}{n})$ and $\mathcal{O}(\frac{k\log p}{n})$ respectively.
\end{rem}
\section{Experiments}
\begin{figure}[!t]
\centering
  \renewcommand{\captionlabelfont}{\footnotesize}
  \setlength{\abovecaptionskip}{4pt}
  \setlength{\belowcaptionskip}{-5pt}
\includegraphics[width=3.5in]{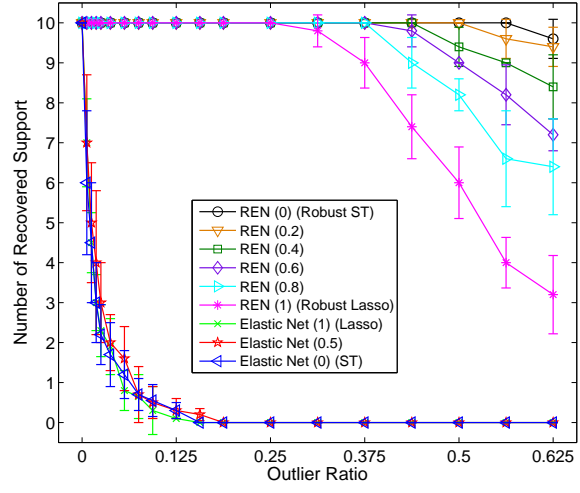}
\caption{. \footnotesize Number of recovered support for different methods while $\bm{X}$ has independent columns. The number beside REN is $\alpha$.}\label{fig1}
\end{figure}
\begin{figure}[!t]
\centering
  \renewcommand{\captionlabelfont}{\footnotesize}
  \setlength{\abovecaptionskip}{4pt}
  \setlength{\belowcaptionskip}{-5pt}
\includegraphics[width=3.5in]{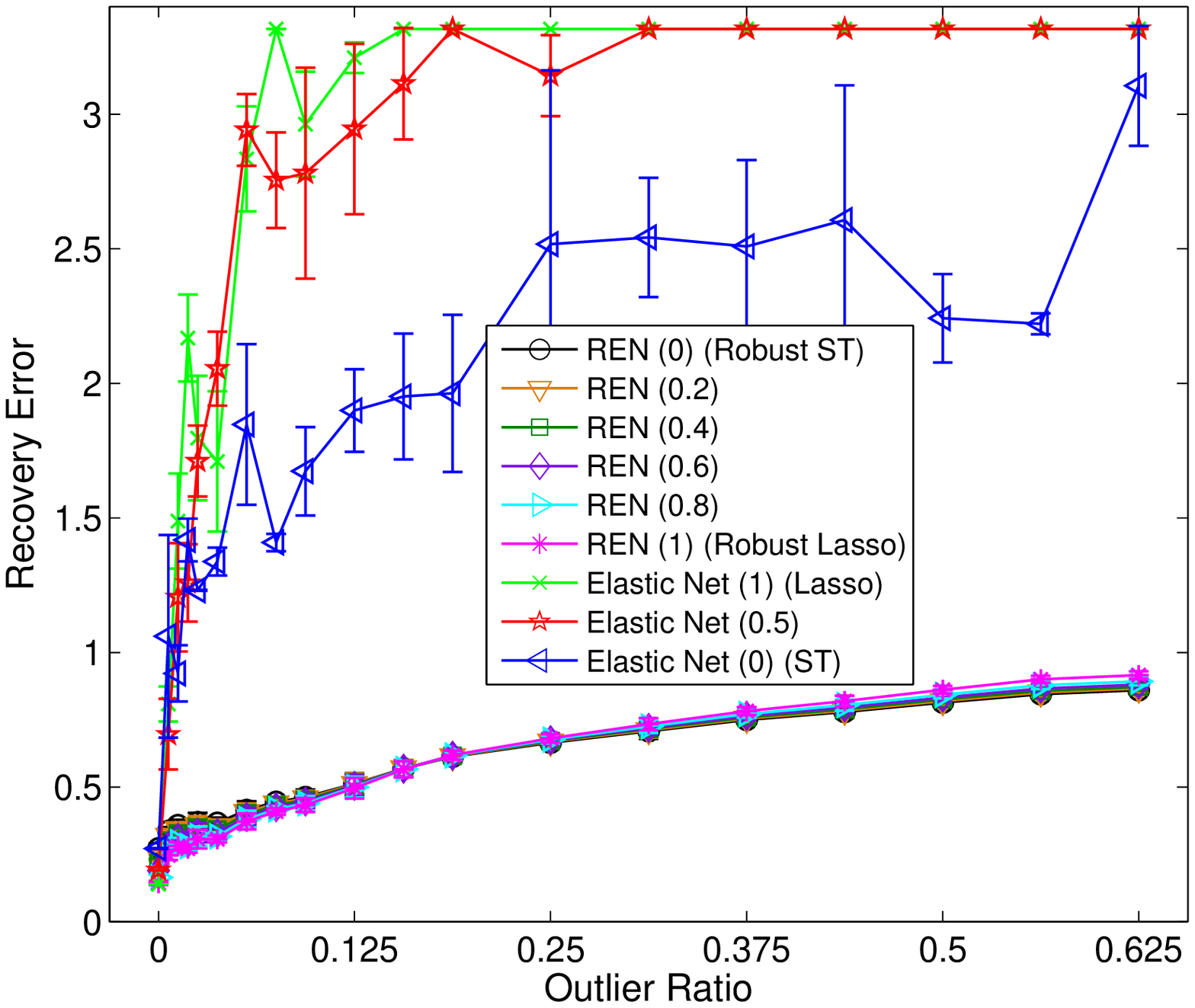}\caption{. \footnotesize $l_2$ support recovery error for different methods with independent columns of $\bm{X}$.}\label{fig2}
\end{figure}
\begin{figure}[!t]
\centering
  \renewcommand{\captionlabelfont}{\footnotesize}
  \setlength{\abovecaptionskip}{4pt}
  \setlength{\belowcaptionskip}{-5pt}
\includegraphics[width=3.5in]{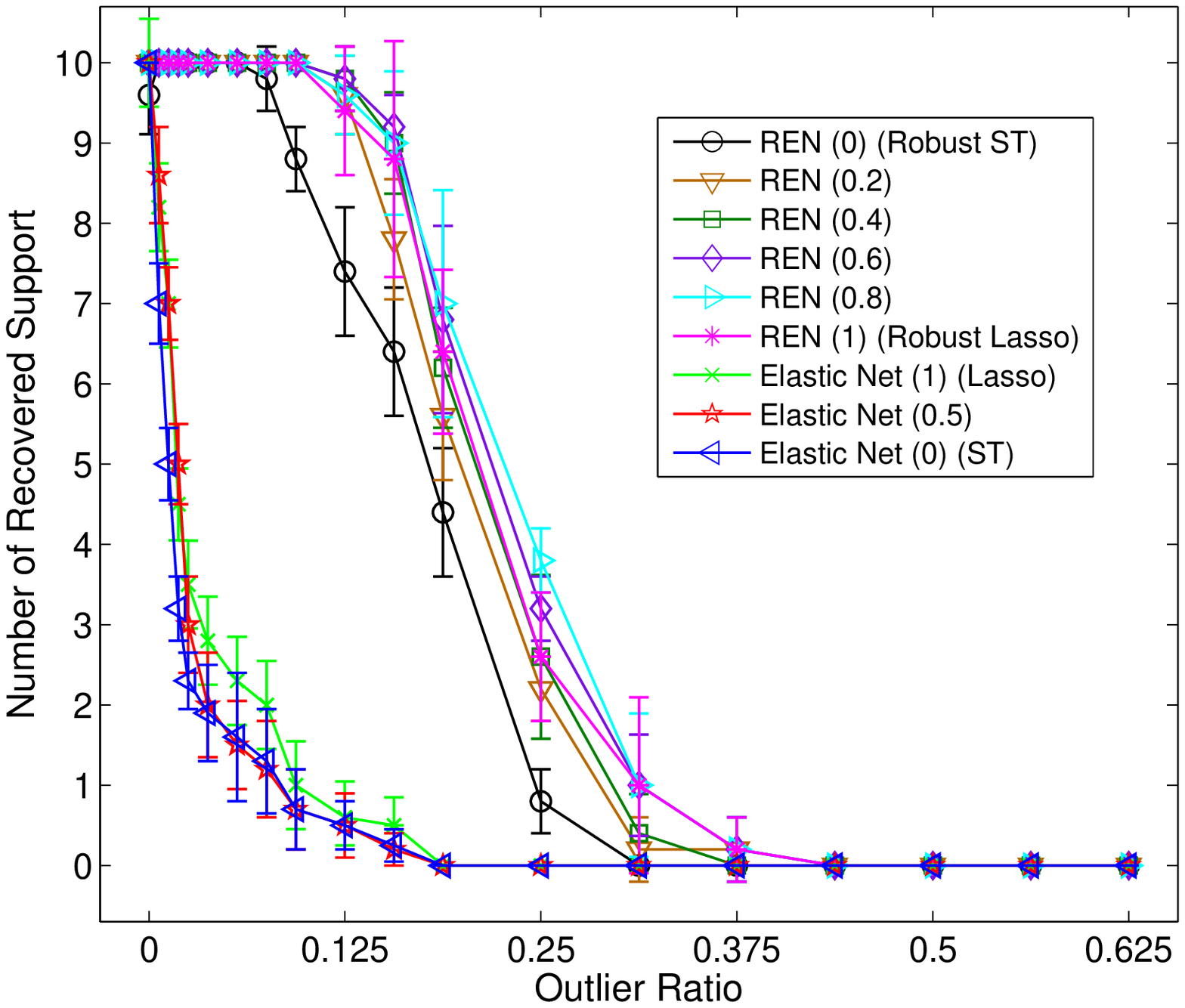}\caption{. \footnotesize Number of recovered support for different methods while $\bm{X}$ has dependent columns.}\label{fig3}
\end{figure}
\begin{figure}[!t]
\centering
  \renewcommand{\captionlabelfont}{\footnotesize}
  \setlength{\abovecaptionskip}{4pt}
  \setlength{\belowcaptionskip}{-5pt}
\includegraphics[width=3.5in]{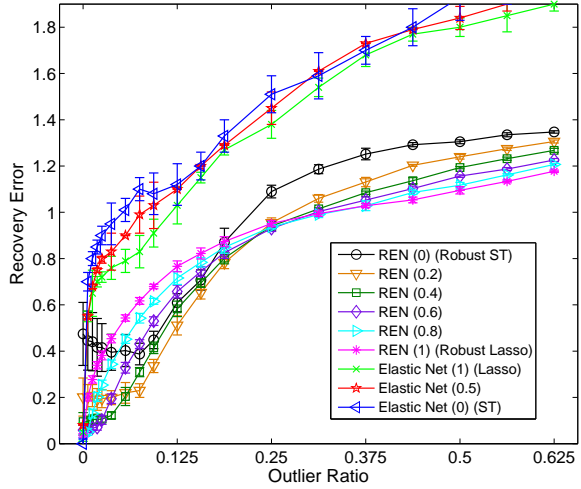}\caption{. \footnotesize $l_2$ support recovery error for different methods with dependent columns of $\bm{X}$.}\label{fig4}
\end{figure}
\subsection{Experimental Setup}
Under the the case where $\bm{X}$ has independent columns, we generate the authentic rows $\{\bm{X}^{\mathcal{A}},\bm{y}^{\mathcal{A}}\}$ following \cite{chen2013robust} via sub-Gaussian construction model with $\Sigma_x=\bm{I}$, $p=4000$, $n=1600$, $k=10$ and $\sigma_{\epsilon}=2$. The non-zero entries in $\beta^*$ are randomly selected as $\pm 1$. The corrupted rows $\{\bm{X}^{\mathcal{O}},\bm{y}^{\mathcal{O}}\}$ are generated via the following procedure. We first obtain $\theta^*$ by $\arg\min_{\|\theta\|_1\leq\|\beta^*\|_1,\theta\in\mathbb{R}^{p-k}}\|,\bm{y}^{\mathcal{A}}-\bm{X}^{\mathcal{A}}_{(\Lambda^*)^c}\theta\|_2$. Then we set $\bm{y}^{\mathcal{O}}=-\bm{X}^{\mathcal{O}}_{\Lambda^*}\beta^*$ where $\bm{X}^{\mathcal{O}}_{\Lambda^*}=\frac{3}{\sqrt{n}}\bm{S}$ and $\bm{S}$ is a $n_o\times k$ random matrix with each entry being $\pm 1$. Finally we let $\{\bm{X}^{\mathcal{O}}_{(\Lambda^*)^c}\}_i=(\frac{\bm{y}^{\mathcal{O}}_i}{\bm{B}^T_i\theta^*})\bm{B}^T_i$ where $\bm{B}_i$ is a $(n-k)$-dimensional vector with i.i.d. standard Gaussian distributed entries. While $\bm{X}$ has correlated columns, the data is generated using $\sigma_e=1$, $\Sigma_x$ with diagonal being 1 and the other entries being 0.4. The other parameters follows the independent case. For more detailed settings refer to the corresponding sections.
\subsection{Robustness Evaluation}
The section evaluates the robustness of the REN model under different $\alpha$. We use the number of recovered support and the $l_2$ recovery error (${\|\hat{\beta}-\beta^*\|_2}/{\|\beta^*\|_2}$) for evaluation.  We first consider the case where the data matrix $\bm{X}$ has independent columns. We vary the REN parameter $\alpha$ from 0 to 1 with step 0.2. The number of recovered support is shown in Fig. \ref{fig1} and the corresponding $l_2$ support recovery error is given in Fig. \ref{fig2}. From Fig. \ref{fig1} and Fig. \ref{fig2}, one can observe that REN consistently outperforms the original elastic net (including both Lasso and soft thresholding) in terms of both number of recovered support and the $l_2$ recovery error. REN can perfectly recover all the support number when outlier ratio is less than 0.5. Under the independent case, we find that REN with $\alpha=0$, i.e. RST, performs best and REN with $\alpha=1$, i.e. robust Lasso, has the worse performance. With $\alpha$ decreasing, the performance of REN gets better. It is partially because the authentic data satisfies $\Sigma_x=\bm{I}$ that is also the assumption for the statistical error bound of RST. The bound for RST is actually smaller than the bound for robust Lasso, so the performance of RST is the best when $\Sigma_x=\bm{I}$.
\begin{figure}[t!]
\centering
  \renewcommand{\captionlabelfont}{\footnotesize}
  \setlength{\abovecaptionskip}{4pt}
  \setlength{\belowcaptionskip}{-5pt}
\includegraphics[width=3.5in]{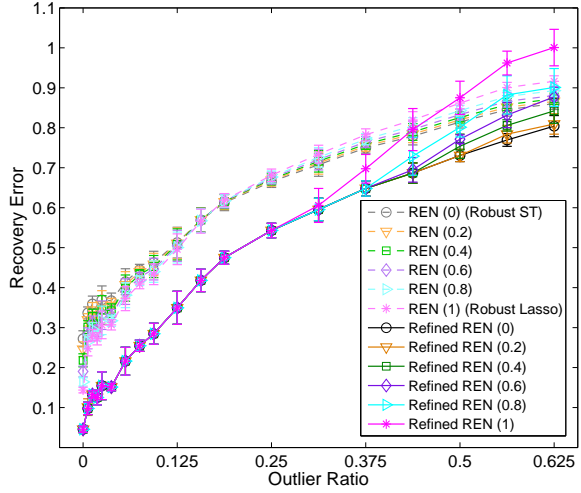}\caption{. \footnotesize Number of recovered support for different methods while $\bm{X}$ has dependent columns.}\label{fig5}
\end{figure}
\begin{figure}[t!]
\centering
  \renewcommand{\captionlabelfont}{\footnotesize}
  \setlength{\abovecaptionskip}{4pt}
  \setlength{\belowcaptionskip}{-5pt}
\includegraphics[width=3.5in]{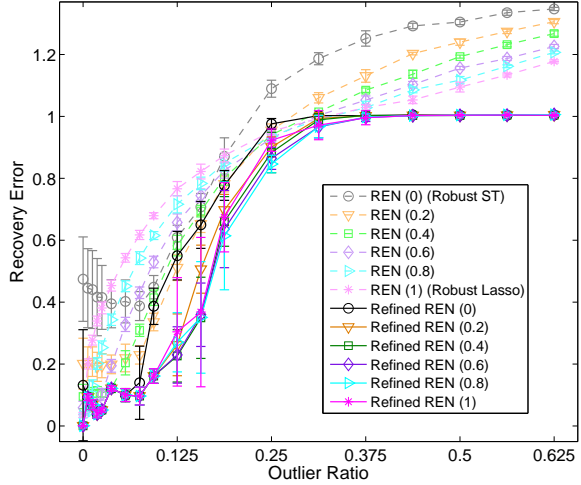}\caption{. \footnotesize $l_2$ support recovery error for different methods with dependent columns of $\bm{X}$.}\label{fig6}
\end{figure}
\par
Then we evaluate the performance of REN in the case that $\bm{X}$ has correlated columns. The results are shown in Fig. \ref{fig3} and Fig. \ref{fig4}. One can observe that REN still consistently outperforms the original elastic net (including both Lasso and soft thresholding). Although the performance of REN decreases with data containing correlated columns, it can still perfectly recover the locations of support with up to 90 outliers. The corresponding $l_2$ recovery error is also much smaller than the original elastic net. It can be observed that the elastic net with $\alpha=1$ has the best performance against outliers, which coincides with the statistical error for the robust Lasso, because Robust Lasso do not requires the bounded number of non-zero values in the columns of $\Sigma_x$.
\subsection{Refinement for Recovery Error}
We observe that even REN recovers most locations of support, but the $l_2$ corresponding recovery error is still very large. (sometime even larger than methods that do not recover the same number of support.) Thus we consider a standard refinement strategy for the REN model. When we recover the locations of support, we simply force the other locations in $\hat{\beta}$ to be zero ,and only use these recovered support to perform the least square regression with $\{\hat{\Gamma}_{\textnormal{REN}}\}_{S,S}$ and $\{\hat{\gamma}_{\textnormal{REN}}\}_S$. $S$ is the set of indexes corresponding to the location of the recovered support. Specifically, we use $\hat{\beta}_S=\{\hat{\Gamma}_{\textnormal{REN}}\}_{S,S}^{-1}\cdot\{\hat{\gamma}_{\textnormal{REN}}\}_S$ and set $\hat{\beta}_{S^c}$ as zeros. The refined $l_2$ recovery error for REN is shown in Fig. \ref{fig5} and Fig. \ref{fig6}. From Fig. \ref{fig5}, one can see that the refined recovery error is much smaller than the original error. However, the refined error gets worse rapidly from a specific outlier ratio. For example, the refined error for the REN ($\alpha=1$) gets worse rapidly from the outlier ratio 0.3125. This is because the support location recovery starts to get worse from the same point. From Fig. \ref{fig6}, one can still see the advantages of the refinement strategy but with less gain, and the results are not as stable as the independent case. The refined error stays to be 1 from outlier ratio 0.25, which coincides with the results in Fig. \ref{fig3} that the recovered support locations are close to 0 from outlier ratio 0.25. Being 1 in refined error indicates that nearly all support locations are wrongly estimated.
\section{Concluding Remarks}
We consider the high-dimensional sparse regression problem in this paper. By observing that the elastic net model (including Lasso and soft thresholding) fails with even a very small fraction of outliers, we proposed a robust elastic net model using the trimmed inner product as a robust counterpart to replace the original inner product. The intuition behind is that the trimmed inner product can not be significantly affected by a bounded number of arbitrarily corrupted points, i.e., arbitrary outliers. Such simple idea works extremely well in high-dimensional settings. We further give the performance guarantees (both statistical $l_{1/2}$ error bound and the optimization error bound) for the REN model. The statistical error shows the guarantees for the support recovery, while the optimization bound makes sure that we can use projected gradient descent to approximate the optima and achieve the statistical error bound. Experimental results match the theoretical performance guarantees nicely and also validate the robustness of the REN model.
%\begin{itemize}
%\item Your .tex file must compile in PDF\LaTeX{} --- \textbf{ no .ps or .eps figure files.}
%\item All fonts must be embedded in the PDF file --- \textbf{ this includes your figures.}
%\end{itemize}
\section{Proof of Theorem 1}
\subsection{Technical Lemmas}
In order to prove Theorem 1, we need two technical lemmas. The first lemma bounds the maximum of independent sub-Gaussian random variables. The second lemma is a standard concentration result for the sum of squares of independent sub-Gaussian random variables.
\begin{lemma}
Given that $Z_1,\cdots,Z_m$ are $m$ independent sub-Gaussian random variables with parameter $\sigma$, we have $\max_{i=1,\cdots,m}\|Z_i\|\leq 4\sigma\sqrt{\log m+\log p}$ with high probability.
\end{lemma}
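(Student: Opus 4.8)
The plan is to combine a Chernoff-type tail bound for a single sub-Gaussian variable with a union bound over the $m$ variables, then choose the threshold so that the resulting failure probability decays polynomially in $m$ and $p$.

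First I would establish the one-sided tail bound for a single sub-Gaussian $Z_i$. Applying Markov's inequality to $\exp(tZ_i)$ for $t>0$, together with the defining moment-generating-function bound $\mathbb{E}\{\exp(tZ_i)\}\leq\exp(t^2\sigma^2/2)$, gives $P(Z_i\geq s)\leq\exp(-ts+t^2\sigma^2/2)$. Optimizing over $t$ by taking $t=s/\sigma^2$ yields $P(Z_i\geq s)\leq\exp(-s^2/(2\sigma^2))$. Since $-Z_i$ is sub-Gaussian with the same parameter $\sigma$, the identical argument controls the lower tail, so a two-sided bound follows: $P(|Z_i|\geq s)\leq 2\exp(-s^2/(2\sigma^2))$.

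Next I would apply a union bound across $i=1,\dots,m$, obtaining $P(\max_{i}|Z_i|\geq s)\leq\sum_{i=1}^{m}P(|Z_i|\geq s)\leq 2m\exp(-s^2/(2\sigma^2))$. Finally I would substitute the target threshold $s=4\sigma\sqrt{\log m+\log p}$, so that $s^2/(2\sigma^2)=8(\log m+\log p)$ and the bound becomes $2m\exp(-8\log m-8\log p)=2m^{-7}p^{-8}$, which is vanishingly small; this is exactly the ``with high probability'' conclusion, and in particular is far smaller than the $p^{-2}$ tolerance used in \textbf{Theorem 1}.

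There is no genuine analytical obstacle here, since the result is a standard maximal inequality for sub-Gaussian variables. The only point requiring care is the bookkeeping on constants: the factor $4$ in the threshold is precisely what pushes the exponent up to $8$ and thereby guarantees a fast-decaying failure probability. I would also remark that the stated independence of the $Z_i$ is not actually needed, since the union bound holds regardless of dependence; it is presumably retained only because the variables arising later in the proof of \textbf{Theorem 1} happen to be independent.
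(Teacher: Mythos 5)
Your proposal is correct and is essentially the paper's own argument: a Chernoff-type bound from the sub-Gaussian moment generating function, a union bound over the $m$ variables (plus the two tails), and the threshold $t=4\sqrt{\log m+\log p}$ yielding a failure probability of order $m^{-7}p^{-8}$, exactly as in the paper. The only cosmetic difference is that the paper applies the union bound inside the expectation (bounding $\mathbb{E}\{e^{t\max_i Z_i/\sigma}\}\leq\sum_i\mathbb{E}\{e^{tZ_i/\sigma}\}$ before invoking Markov) whereas you union over the per-variable tail probabilities, and your observation that independence is never actually used applies equally to the paper's proof.
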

\begin{proof}
Define $Z_m=\max_iZ_i$. According to the definition of sub-Gaussianity, we have
\begin{equation}
\begin{aligned}
\mathbb{E}\{e^{\frac{tZ_m}{\sigma}}\}&=\mathbb\{\max_ie^{\frac{tZ_i}{\sigma}}\}\\
&\leq \sum_i\mathbb{E}\{e^{\frac{tZ_i}{\sigma}}\}\\
&\leq me^{\frac{t^2}{2}}\\
&=e^{\frac{t^2}{2}+\log m}.
\end{aligned}
\end{equation}
Using Markov inequality, we obatin
\begin{equation}
\begin{aligned}
P(Z_m\geq\sigma t)&=P(e^{\frac{tZ_m}{\sigma}}\geq e^{t^2})\\
&\leq e^{-t^2}\mathbb{E}\{e^{\frac{tZ_m}{\sigma}}\}\\
&\leq e^{-t^2+t^2/2+\log m}\\
&=e^{-\frac{t^2}{2}+\log m}.
\end{aligned}
\end{equation}
By symmetry, we have
\begin{equation}
P(\min_i Z_i\leq -\sigma t)\leq e^{-\frac{t^2}{2}+\log m}.
\end{equation}
Therefore a union bound gives
\begin{equation}
\begin{aligned}
P(\max_i|Z_i|\geq \sigma t)&\leq P(\max_i Z_i\geq \sigma t) + P(\min_i Z_i \leq -\sigma t)\\
&\leq 2e^{-\frac{1}{2}t^2+\log m},\\
\end{aligned}
\end{equation}
which yields the result with $t=4\sqrt{\log m+\log p}$.
\end{proof}
\begin{lemma}
Let $Y_1,\cdots,Y_n$ be n i.i.d zero-mean sub-Gaussian random variables with parameter $\frac{1}{\sqrt{n}}$ and variance at most $\frac{1}{n}$. Then we have
\begin{equation}
|\sum_{i=1}^{n}Y_i^2-1|\leq c_1\sqrt{\frac{\log p}{n}}
\end{equation}
with high probability for some absolute constant $c_1$. If $Z_1,\cdots,Z_n$ are also i.i.d. zero-mean sub-Gaussian random variables with parameter $\frac{1}{\sqrt{n}}$ and variance at most $\frac{1}{n}$, and independent of $Y_1,\cdots,Y_n$, then we have
\begin{equation}
|\sum_{i=1}^{n}Y_iZ_i|\leq c_2\sqrt{\frac{\log p}{n}}
\end{equation}
with high probability for some absolute constant $c_2$.
\end{lemma}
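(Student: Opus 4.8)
The plan is to recognize both claims as instances of sub-exponential concentration and to dispatch each with Bernstein's inequality. For the first claim I would first center the summands, writing $\sum_{i=1}^n Y_i^2 - 1 = \sum_{i=1}^n \big(Y_i^2 - \mathbb{E}Y_i^2\big)$ under the normalization $\mathbb{E}\big[\sum_i Y_i^2\big]=1$ (the variance of each $Y_i$ being $1/n$). Each centered summand $W_i := Y_i^2 - \mathbb{E}Y_i^2$ is a zero-mean \emph{sub-exponential} random variable, because the square of a sub-Gaussian variable is sub-exponential; with sub-Gaussian parameter $1/\sqrt{n}$, the sub-exponential parameters of $W_i$ scale as $\nu_i^2 = O(1/n^2)$ and $b_i = O(1/n)$.

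Next I would make the sub-exponential moment bound explicit, establishing $\mathbb{E}[\exp(sW_i)] \leq \exp(\nu_i^2 s^2/2)$ for all $|s| \leq 1/b_i$ by controlling $\mathbb{E}[\exp(sY_i^2)]$ through the sub-Gaussian tail of $Y_i$. Bernstein's inequality for independent centered sub-exponential variables then yields
\[
P\Big(\Big|\sum_{i=1}^n W_i\Big| \geq t\Big) \leq 2\exp\Big(-\tfrac{1}{2}\min\Big\{\tfrac{t^2}{\sum_i \nu_i^2},\ \tfrac{t}{\max_i b_i}\Big\}\Big).
\]
Since $\sum_i \nu_i^2 = O(1/n)$ and $\max_i b_i = O(1/n)$, choosing $t = c_1\sqrt{\log p/n}$ makes the quadratic branch $t^2/\sum_i\nu_i^2 = O(\log p)$ the active one; it dominates the linear branch $t/\max_i b_i = O(\sqrt{n\log p})$ whenever $n \gtrsim \log p$, which is guaranteed by the sample-size condition \eqref{cond1}. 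The probability is therefore at most $2p^{-c}$, giving the claimed high-probability bound.

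For the second claim I would proceed identically after noting $\mathbb{E}[Y_iZ_i]=\mathbb{E}[Y_i]\mathbb{E}[Z_i]=0$ by independence and zero mean. The product $Y_iZ_i$ is again sub-exponential; the cleanest route is the polarization identity $Y_iZ_i = \tfrac{1}{4}\big[(Y_i+Z_i)^2 - (Y_i-Z_i)^2\big]$, which writes $\sum_i Y_iZ_i$ as a difference of two sums of squares of sub-Gaussian variables (note $Y_i\pm Z_i$ is sub-Gaussian with parameter $O(1/\sqrt{n})$, being a sum of independent sub-Gaussians), thereby reducing it to the first part. Alternatively one bounds the sub-exponential MGF of $Y_iZ_i$ directly. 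Either way the same Bernstein argument with $t = c_2\sqrt{\log p/n}$ closes the estimate.

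The main obstacle is the bookkeeping in the middle step: establishing the sub-exponential MGF bound for $W_i$ (and for $Y_iZ_i$) with parameters that faithfully carry the $1/\sqrt{n}$ scaling, and then confirming that the threshold $t = \Theta(\sqrt{\log p/n})$ lands in the regime where Bernstein's quadratic term governs. Once the pair $(\nu_i,b_i)$ is pinned down, the remaining concentration argument is routine.
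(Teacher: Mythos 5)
Your proof is correct and matches the substance of the paper's own proof: the paper disposes of this lemma with a one-line citation to Lemma 14 in the supplementary material of \cite{loh2012high}, and your argument---squares of sub-Gaussians are sub-exponential, Bernstein's inequality with $t=\Theta(\sqrt{\log p/n})$ in the quadratic regime $n\gtrsim\log p$, and the polarization identity to reduce the cross-term $\sum_i Y_iZ_i$ to the squares case---is precisely the standard proof of that cited lemma, so you have simply made explicit what the paper outsources. One small point in your favor: the lemma says the variance is ``at most'' $1/n$ while the claim centers $\sum_i Y_i^2$ at exactly $1$, which is inconsistent as literally stated; your normalization $\mathbb{E}\big[\sum_{i=1}^n Y_i^2\big]=1$ is the intended reading, and flagging it as a normalization rather than deriving the centering from the stated hypotheses is the right call.
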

\begin{proof}
The proof follows directly from Lemma 14 in the supplementary material of \cite{loh2012high}.
\end{proof}
\subsection{Main Proof}
Recall that $\hat{\Gamma}_{\textnormal{REN}}=\alpha \bm{X}^T_{\mathcal{T}}\bm{X}_{\mathcal{T}}+(1-\alpha)\bm{I}$ and $\hat{\gamma}_{\textnormal{REN}}=\bm{X}^T_{\mathcal{T}} \bm{X}_{\mathcal{T}}\beta^*$, where $\mathcal{T}$ is the index set corresponding to the trimmed inner product. Let $\Delta=\hat{\beta}-\beta^*$, $F=\hat{\Gamma}_{\textnormal{REN}}-\alpha\bm{X}_{\mathcal{A}}^T\bm{X}_{\mathcal{A}}-(1-\alpha)\bm{I}$, $f=\hat{\gamma}_{\textnormal{REN}}-\bm{X}_{\mathcal{A}}^T\bm{X}_{\mathcal{A}}\beta^*$, and $\mathcal{A}$ is the index set of authentic data, $\mathcal{O}$ is the index set of outliers, $S=\textnormal{support}(\beta^*)$. For any vector $b\in\mathbb{R}^p$, we define $b_S$ as the vector with $(b_S)_i=b_i$ for $i\in S$ and $(b_S)_i=0$ for $i\notin S$.
\par
With $\hat{\beta}$ satisfying the constraint in the optimization problem in Robust Lasso, we have
\begin{equation}
\begin{aligned}
\|\beta^*\|_1&\geq \|\beta^*+\Delta\|_1\\
&=\|\beta^*+\Delta_S\|_1+\|\Delta_{S^c}\|_1\\
&\geq \|\beta^*\|_1-\|\Delta_S\|_1+\|\Delta_{S^c}\|_1,
\end{aligned}
\end{equation}
which follows that $\|\Delta_{S^c}\|\leq\|\Delta_S\|_1$. Given $\|S\|=k$, we have the following inequality
\begin{equation}
\begin{aligned}
\|\Delta\|_1&=\|\Delta_S\|_1+\|\Delta_{S^c}\|_1\\
&\leq 2\|\Delta_S\|_1\\
&\leq 2\sqrt{k}\|\Delta_S\|_2\\
&\leq 2\sqrt{k} \|\Delta\|_2
\end{aligned}
\end{equation}
\par
Under the assumption for $n$ in this theorem, Lemma 1 in \cite{loh2012high} guarantees that the authentic $\bm{X}_{\mathcal{A}}$ satisfies the Restricted Strong Convexity (RSC) under the assumption of this theorem:
\begin{equation}\label{lem2eq1}
\bm{u}^T(\bm{X}_{\mathcal{A}}^T\bm{X}_{\mathcal{A}})\bm{u}\geq \frac{1}{4}\lambda_{\min}(\Sigma_x)\|\bm{u}\|_2^2,\ \ \ \ \forall \bm{u}: \|\bm{u}\|_1\leq 2\sqrt{k}\|\bm{u}\|_2
\end{equation}
Because $\|\Delta\|_1\leq2\sqrt{k}\|\Delta\|_2$, we obtain
\begin{equation}\label{eq2}
\begin{aligned}
\Delta^T\hat{\Gamma}_{\textnormal{REN}}^T\Delta &= \alpha\Delta^T(\bm{X}_{\mathcal{A}}^T\bm{X}_{\mathcal{A}})\Delta+\Delta^TF\Delta+(1-\alpha)\Delta^T\Delta\\
&\geq \frac{\alpha}{2}\lambda_{\min}(\Sigma_x)\|\Delta\|_2^2-\|F\|_{\infty}\sum_{i,j}|\Delta_i||\Delta_j| +(1-\alpha)\Delta^T\Delta\\
&\geq \frac{\alpha}{2}\lambda_{\min}(\Sigma_x)\|\Delta\|_2^2-4k\|F\|_{\infty}\|\Delta\|^2_2 +(1-\alpha)\|\Delta\|_2^2\\
\end{aligned}
\end{equation}
For $F$, we can bound it with
\begin{equation}
\begin{aligned}
F_{ij}&=\alpha\bigg(\langle \bm{X}_i,\bm{X}_j\rangle_{n_o}-\langle \{\bm{X}_{\mathcal{A}}\}_i,\{\bm{X}_{\mathcal{A}}\}_j\rangle\bigg)\\
&=-\alpha\sum_{k\in \mathcal{T}\cap\mathcal{A}}\bm{X}_{ki}\bm{X}_{kj}+\alpha\sum_{k\in \mathcal{T}^c\cap\mathcal{O}}\bm{X}_{ki}\bm{X}_{kj}\\
&\leq 2n_o\alpha\bigg( \max_{k\in\mathcal{A}}|\bm{X}_{ki}| \bigg)\bigg( \max_{k\in\mathcal{A}}|\bm{X}_{kj}| \bigg)
\end{aligned}
\end{equation}
Since $\bm{X}_{ki},k\in\mathcal{A}$ are independent sub-Gaussian variable with parameter $\frac{1}{n}\sigma_x^2$, Lemma 1 concludes $\max_{k\in\mathcal{A}}|\bm{X}_{ki}|\lesssim \sigma_x\sqrt{\frac{\log p}{n}}$ with high probability. Thus it follows from a union bound over $(i,j)$ that
\begin{equation}
\|F\|_{\infty}\lesssim\alpha\frac{n_o\log p}{n}\sigma_x^2.
\end{equation}
Recall the assumption of the theorem that $\frac{n_o}{n}\lesssim\frac{\lambda_{\min}(\Sigma_x)}{\sigma_x^2k\log p}$. Thus we have
\begin{equation}\label{eq3}
\|F\|_{\infty}\leq\alpha\frac{\lambda_{\min}(\Sigma_x)}{16k}
\end{equation}
Combining Eq. \eqref{eq2} and Eq. \eqref{eq3}, we have
\begin{equation}
\Delta^T\hat{\Gamma}^T_{\textnormal{REN}}\Delta\geq \frac{\alpha}{4}\lambda_{\min}(\Sigma_x)\|\Delta\|_2^2+(1-\alpha)\|\Delta\|_2^2
\end{equation}
According to Holder's inequality and $\|\Delta\|_1\leq2\sqrt{k}\|\Delta\|_2$, we obtain
\begin{equation}\label{eq4}
\begin{aligned}
\langle\hat{\gamma}_{\textnormal{REN}}-\hat{\Gamma}_{\textnormal{REN}}\beta^*,\Delta\rangle&\leq\|\hat{\gamma}_{\textnormal{REN}}-\hat{\Gamma}_{\textnormal{REN}}\beta^*\|_{\infty}\|\Delta\|_1\\
&\leq 4\sqrt{k}\|\hat{\gamma}_{\textnormal{REN}}-\hat{\Gamma}_{\textnormal{REN}}\beta^*\|_{\infty}\|\Delta\|_2
\end{aligned}
\end{equation}
in which we note that
\begin{equation}\label{eq5}
\begin{aligned}
\|\hat{\gamma}_{\textnormal{REN}}-\hat{\Gamma}_{\textnormal{REN}}\beta^*\|_{\infty}&\leq\|\bm{X}^T_{\mathcal{A}}\bm{X}_{\mathcal{A}}\beta^*-\hat{\Gamma}_{\textnormal{REN}}\beta^*\|_{\infty} +\|\hat{\gamma}_{\textnormal{REN}}-\bm{X}^T_{\mathcal{A}}\bm{X}_{\mathcal{A}}\beta^*\|_{\infty}\\
&=\|\alpha\bm{X}^T_{\mathcal{A}}\bm{X}_{\mathcal{A}}\beta^*+(1-\alpha)\bm{I}\beta^*-\hat{\Gamma}_{\textnormal{REN}}\beta^*+(1-\alpha)\bm{X}^T_{\mathcal{A}}\bm{X}_{\mathcal{A}}\beta^*-(1-\alpha)\bm{I}\beta^*\|_\infty\\
&\ \ \ \ \ \ + \|\hat{\gamma}_{\textnormal{REN}}-\bm{X}^T_{\mathcal{A}}\bm{X}_{\mathcal{A}}\beta^*\|_\infty\\
&\leq\|\alpha\bm{X}^T_{\mathcal{A}}\bm{X}_{\mathcal{A}}\beta^*+(1-\alpha)\bm{I}\beta^*-\hat{\Gamma}_{\textnormal{REN}}\beta^*\|_\infty + \|(1-\alpha)\bm{X}^T_{\mathcal{A}}\bm{X}_{\mathcal{A}}\beta^*-(1-\alpha)\bm{I}\beta^*\|_\infty\\
&\ \ \ \ \ \ + \|\hat{\gamma}_{\textnormal{REN}}-\bm{X}^T_{\mathcal{A}}\bm{X}_{\mathcal{A}}\beta^*\|_\infty\\
&=\|F\beta^*\|_\infty + (1-\alpha)\|\bm{X}^T_{\mathcal{A}}\bm{X}_{\mathcal{A}}\beta^*-\bm{I}\beta^*\|_\infty + \|f\|_\infty
\end{aligned}
\end{equation}
For the first term $\|F\beta^*\|_\infty$ in Eq. \eqref{eq5}, we can bound it using $\|F\|_{\infty}\lesssim\alpha\frac{n_o\log p}{n}\sigma_x^2$ and the $k$-sparsity of $\beta^*$:
\begin{equation}
\|F\beta^*\|_\infty\leq\frac{\alpha \sqrt{k}n_o\log p}{n}\sigma_x^2\|\beta^*\|_2
\end{equation}
For the second term $(1-\alpha)\|\bm{X}^T_{\mathcal{A}}\bm{X}_{\mathcal{A}}\beta^*-\bm{I}\beta^*\|_\infty$ in Eq. \eqref{eq5}, we have
\begin{equation}
(1-\alpha)\|\bm{X}^T_{\mathcal{A}}\bm{X}_{\mathcal{A}}\beta^*-\bm{I}\beta^*\|_\infty \leq(1-\alpha)\|\bm{X}_{\mathcal{A}}^T\bm{X}_{\mathcal{A}}-\bm{I}\|_\infty\cdot\sqrt{k}\|\beta^*\|_2.
\end{equation}
Using Lemma 2 and the theorem assumption that $\max_j \big{\{} \| \Sigma_{xj} \|_0 \big{\}} \leq c_3$ where $\Sigma_{xj}$ denotes the $j$th column of the covariance matrix $\Sigma_x$, we have
\begin{equation}
(1-\alpha)\|\bm{X}^T_{\mathcal{A}}\bm{X}_{\mathcal{A}}\beta^*-\bm{I}\beta^*\|_\infty \leq(1-\alpha)c_3 \sqrt{\frac{k\log p}{n}}\|\beta^*\|_2.
\end{equation}
For the third term $\|f\|_\infty$ in Eq. \eqref{eq5}, we decompose $f_j$ as
\begin{equation}
\begin{aligned}
f_j&=\langle\bm{X}_j,\bm{y}\rangle_{n_o}-\langle \bm{X}_{\mathcal{A}j}, \bm{X}_{\mathcal{A}}\beta^* \rangle\\
&=\sum_{i\in \mathcal{T}^c}\bm{X}_{ij}\bm{y}_i-\langle \bm{X}_{\mathcal{A}j}, \bm{X}_{\mathcal{A}}\beta^* \rangle\\
&=\bigg( \sum_{i\in\mathcal{A}}\bm{X}_{ij}\bm{y}_i - \langle \bm{X}_{\mathcal{A}j}, \bm{X}_{\mathcal{A}}\beta^* \rangle \bigg)-\sum_{i\in\mathcal{T}\cap\mathcal{A}}\bm{X}_{ij}\bm{y}_i +\sum_{i\in\mathcal{T}^c\cap\mathcal{O}}\bm{X}_{ij}\bm{y}_i\\
&=\langle\bm{X}_{\mathcal{A}j},\bm{\epsilon}\rangle-\sum_{i\in\mathcal{T}\cap\mathcal{A}}\bm{X}_{ij}\bm{y}_i +\sum_{i\in\mathcal{T}^c\cap\mathcal{O}}\bm{X}_{ij}\bm{y}_i
\end{aligned}
\end{equation}
From Lemma 2, we obtain $\langle\bm{X}_{\mathcal{A}j},\bm{\epsilon}\rangle\approx\sqrt{\frac{\sigma_{\epsilon}^2\log p}{n}}$ with high probability. Under  the sub-Gaussian construction model, each $\bm{y}_i,i\in\mathcal{A}$ is sub-Gaussian with parameter $\frac{\sigma_\epsilon^2+\sigma_x^2\|\beta^*\|_2^2}{n}$. Based on the Lemma 1, we have
\begin{equation}
\bigg{|}-\sum_{i\in\mathcal{T}\cap\mathcal{A}}\bm{X}_{ij}\bm{y}_i +\sum_{i\in\mathcal{T}^c\cap\mathcal{O}}\bm{X}_{ij}\bm{y}_i\bigg{|}\lesssim\frac{n_o\log p}{n}\sigma_x^2\sqrt{\sigma_\epsilon^2+\sigma_x^2\|\beta^*\|^2_2}
\end{equation}
which follows from a union bound and gives
\begin{equation}
\|f\|_\infty\lesssim\sqrt{\frac{\sigma_\epsilon^2 \log p}{n}}+\frac{n_o\log p}{n}\sigma_x^2\sqrt{\sigma_\epsilon^2+\sigma_x^2\|\beta^*\|_2^2}
\end{equation}
Combining Eq. \eqref{eq4}, Eq. \eqref{eq5} and the other pieces, we have
\begin{equation}
\begin{aligned}
\langle\hat{\gamma}_{\textnormal{REN}}-\hat{\Gamma}_{\textnormal{REN}}\beta^*,\Delta\rangle &\lesssim4\sqrt{k}\|\Delta\|_2\bigg( \frac{\alpha\sqrt{k}n_o\log p}{n}\sigma_x^2\|\beta\|_2 + \sqrt{\frac{\sigma_\epsilon^2 \log p}{n}}\\
&\ \ \ \ \ \ + \frac{n_o\log p}{n}\sigma_x\sqrt{\sigma_\epsilon^2+ \sigma_x^2\|\beta^*\|_2^2} + (1-\alpha)c_3\sqrt{k}\sqrt{\frac{\log p}{n}}\|\beta^*\|_2 \bigg)
\end{aligned}
\end{equation}
\par
According to the optimality of $\hat{\beta}$, we have
\begin{equation}
\frac{1}{2}\hat{\beta}^T\hat{\Gamma}_{\textnormal{REN}}\hat{\beta}-\hat{\gamma}_{\textnormal{REN}}^T\hat{\beta}\leq \frac{1}{2}\hat{\beta}^{*T}\hat{\Gamma}_{\textnormal{REN}}\hat{\beta^*}-\hat{\gamma}_{\textnormal{REN}}^T\hat{\beta^*}
\end{equation}
which can be rearranged as
\begin{equation}
\frac{1}{2}\Delta^T\hat{\Gamma}_{\textnormal{REN}}\Delta\leq\langle \hat{\gamma}_{\textnormal{REN}}-\hat{\Gamma}_{\textnormal{REN}}\beta^*,\Delta \rangle
\end{equation}
Combining all the derived inequalities, we have
\begin{equation}
\begin{aligned}
(\frac{\alpha}{8}\lambda_{\min}(\Sigma_x)+\frac{1-\alpha}{2})\|\Delta\|_2^2&\leq\frac{1}{2}\Delta^T\hat{\Gamma}_{\textnormal{REN}}\Delta\leq \langle \hat{\gamma}_{\textnormal{REN}}-\hat{\Gamma}_{\textnormal{REN}}\beta^*,\Delta \rangle\\
&\leq 4\sqrt{k}\| \hat{\gamma}_{\textnormal{REN}}-\hat{\Gamma}_{\textnormal{REN}}\beta^*,\Delta \|_{\infty}\|\Delta\|_2
\end{aligned}
\end{equation}
which derives
\begin{equation}
\begin{aligned}
\frac{1}{2\sqrt{k}}\|\Delta\|_1\leq\|\Delta\|_2&\leq \frac{4\sqrt{k}}{\frac{\alpha}{8}\lambda_{\min}(\Sigma_x)+\frac{1-\alpha}{2}}\|\hat{\gamma}_{\textnormal{REN}}-\hat{\Gamma}_{\textnormal{REN}}\beta^*\|_{\infty}\\
&\leq \frac{32}{\alpha\lambda_{\min}(\Sigma_x)+4(1-\alpha)}\bigg( \frac{\alpha kn_o\log p}{n}\sigma_x^2\|\beta\|_2 + \sqrt{\frac{k\sigma_\epsilon^2 \log p}{n}}\\
&\ \ \ \ \ \ + \frac{n_o\sqrt{k}\log p}{n}\sigma_x\sqrt{\sigma_\epsilon^2+ \sigma_x^2\|\beta^*\|_2^2} + (1-\alpha)c_3k\sqrt{\frac{\log p}{n}}\|\beta^*\|_2 \bigg)
\end{aligned}
\end{equation}
which finishes the proof of the theorem.
\section{Proof of Corollary 1}
In fact, the proof of Corollary 1 follows directly from Theorem 1. The difference is that if $\alpha=1$ in Eq. \eqref{eq5}, then the second term $(1-\alpha)\|\bm{X}^T_{\mathcal{A}}\bm{X}_{\mathcal{A}}\beta^*-\bm{I}\beta^*\|_\infty$ is removed. Therefore, we do not need to use the condition $\max_j \big{\{} \| \Sigma_{xj} \|_0 \big{\}} \leq c_3$ to bound this term. So the conclusion of Corollary 1 is the same as Theorem 1 with $\alpha=1$, but they differs in the required conditions.
\section{Proof of Corollary 2}
The proof of Corollary 2 is essentially the same as Theorem 1. Corollary 2 can be directly concluded with $\alpha=0$ and $\Sigma_x=\bm{I}$.
\section{Proof of Lemma 1}
From Eq. \eqref{lem2eq1}, Eq. \eqref{eq2} and Eq. \eqref{eq3} in the proof of Theorem 1 and, we can derive the following inequality:
\begin{equation}
\begin{aligned}
\Delta^T\hat{\Gamma}_{\textnormal{REN}}^T\Delta &= \alpha\Delta^T(\bm{X}_{\mathcal{A}}^T\bm{X}_{\mathcal{A}})\Delta+\Delta^TF\Delta+(1-\alpha)\Delta^T\Delta\\
&\geq \frac{\alpha}{2}\lambda_{\min}(\Sigma_x)\|\Delta\|_2^2-\|F\|_{\infty}\sum_{i,j}|\Delta_i||\Delta_j| +(1-\alpha)\Delta^T\Delta\\
&= \frac{\alpha}{2}\lambda_{\min}(\Sigma_x)\|\Delta\|_2^2-\|F\|_{\infty}\|\Delta\|^2_1 +(1-\alpha)\|\Delta\|_2^2\\
&= \bigg(\frac{\alpha}{2}\lambda_{\min}(\Sigma_x)+(1-\alpha)\bigg)\|\Delta\|_2^2-\|F\|_{\infty}\|\Delta\|^2_1\\
&\geq\bigg(\frac{\alpha}{2}\lambda_{\min}(\Sigma_x)+(1-\alpha)\bigg)\|\Delta\|_2^2-\frac{\alpha\lambda_{\min}(\Sigma_x)}{8}\|\Delta\|^2_1\\
\end{aligned}
\end{equation}
which satisfies the lower-RE condition.
\par
Similarly, according to Lemma 1, Lemma 13 in the supplementary material of \cite{loh2012high} and the restricted smoothness, we can obtain
\begin{equation}
\Delta^T\hat{\Gamma}_{\textnormal{REN}}^T\Delta \leq\bigg(\frac{3\alpha}{2}\lambda_{\max}(\Sigma_x)+(1-\alpha)\bigg)\|\Delta\|_2^2+\frac{\alpha\lambda_{\min}(\Sigma_x)}{8}\|\Delta\|^2_1\\
\end{equation}
which satisfies the upper-RE condition. Therefore, one possible set of values for $\mu_1$, $\mu_2$ and $\tau(n,p)$ is
\begin{align}\label{lem1con}
&\mu_1=\alpha\frac{\lambda_{\min}(\Sigma_x)}{2}+(1-\alpha),\\ &\mu_2=\alpha\frac{3\lambda_{\max}(\Sigma_x)}{2}+(1-\alpha),\\
&\tau(n,p)=\frac{\alpha}{8}\lambda_{\min}(\Sigma),
\end{align}
\par
Combing pieces, we finish the proof of Lemma 1 in the main paper. It can also be proved with a simple extension using the Appendix 5 in the supplementary material of \cite{chen2013robust}.
\section{Proof of Theorem 2}
Using the Lemma 1 in the main paper, we know that the $\hat{\Gamma}_{\textnormal{REN}}$ satisfies the lower-RE and upper-RE conditions with certain parameters. We can directly follow the Theorem 2 in \cite{loh2012high} to prove this theorem.
{
\bibliographystyle{plain}
\bibliography{REN}

\begin{thebibliography}{10}

\bibitem{agarwal2012fast}
Alekh Agarwal, Sahand Negahban, Martin~J Wainwright, et~al.
\newblock Fast global convergence of gradient methods for high-dimensional
  statistical recovery.
\newblock {\em The Annals of Statistics}, 40(5):2452--2482, 2012.

\bibitem{bickel2009simultaneous}
Peter~J Bickel, Ya'acov Ritov, and Alexandre~B Tsybakov.
\newblock Simultaneous analysis of lasso and dantzig selector.
\newblock {\em The Annals of Statistics}, pages 1705--1732, 2009.

\bibitem{candes2005decoding}
Emmanuel~J Candes and Terence Tao.
\newblock Decoding by linear programming.
\newblock {\em Information Theory, IEEE Transactions on}, 51(12):4203--4215,
  2005.

\bibitem{chen1998atomic}
Scott~Shaobing Chen, David~L Donoho, and Michael~A Saunders.
\newblock Atomic decomposition by basis pursuit.
\newblock {\em SIAM journal on scientific computing}, 20(1):33--61, 1998.

\bibitem{chen2013noisy}
Yudong Chen and Constantine Caramanis.
\newblock Noisy and missing data regression: Distribution-oblivious support
  recovery.
\newblock In {\em Proceedings of The 30th International Conference on Machine
  Learning}, pages 383--391, 2013.

\bibitem{chen2013robust}
Yudong Chen, Constantine Caramanis, and Shie Mannor.
\newblock Robust sparse regression under adversarial corruption.
\newblock In {\em Proceedings of the 30th International Conference on Machine
  Learning (ICML-13)}, pages 774--782, 2013.

\bibitem{donoho2006compressed}
David~L Donoho.
\newblock Compressed sensing.
\newblock {\em Information Theory, IEEE Transactions on}, 52(4):1289--1306,
  2006.

\bibitem{duchi2008efficient}
John Duchi, Shai Shalev-Shwartz, Yoram Singer, and Tushar Chandra.
\newblock Efficient projections onto the l 1-ball for learning in high
  dimensions.
\newblock In {\em Proceedings of the 25th international conference on Machine
  learning}, pages 272--279. ACM, 2008.

\bibitem{efron2004least}
Bradley Efron, Trevor Hastie, Iain Johnstone, Robert Tibshirani, et~al.
\newblock Least angle regression.
\newblock {\em The Annals of statistics}, 32(2):407--499, 2004.

\bibitem{feng2014robust}
Jiashi Feng, Huan Xu, Shie Mannor, and Shuicheng Yan.
\newblock Robust logistic regression and classification.
\newblock In {\em Advances in Neural Information Processing Systems}, pages
  253--261, 2014.

\bibitem{hauberg2014grassmann}
Soren Hauberg, Aasa Feragen, and Michael~J Black.
\newblock Grassmann averages for scalable robust pca.
\newblock In {\em Computer Vision and Pattern Recognition (CVPR), 2014 IEEE
  Conference on}, pages 3810--3817. IEEE, 2014.

\bibitem{laska2009exact}
Jason~N Laska, Mark Davenport, Richard~G Baraniuk, et~al.
\newblock Exact signal recovery from sparsely corrupted measurements through
  the pursuit of justice.
\newblock In {\em Signals, Systems and Computers, 2009 Conference Record of the
  Forty-Third Asilomar Conference on}, pages 1556--1560. IEEE, 2009.

\bibitem{li2013compressed}
Xiaodong Li.
\newblock Compressed sensing and matrix completion with constant proportion of
  corruptions.
\newblock {\em Constructive Approximation}, 37(1):73--99, 2013.

\bibitem{loh2012high}
Po-Ling Loh and Martin~J Wainwright.
\newblock High-dimensional regression with noisy and missing data: Provable
  guarantees with nonconvexity.
\newblock {\em The Annals of Statistics}, 40(3):1637--1664, 2012.

\bibitem{nguyen2013exact}
Nam~H Nguyen and Trac~D Tran.
\newblock Exact recoverability from dense corrupted observations via
  l1-minimization.
\newblock {\em Information Theory, IEEE Transactions on}, 59(4):2017--2035,
  2013.

\bibitem{raskutti2010restricted}
Garvesh Raskutti, Martin~J Wainwright, and Bin Yu.
\newblock Restricted eigenvalue properties for correlated gaussian designs.
\newblock {\em The Journal of Machine Learning Research}, 11:2241--2259, 2010.

\bibitem{rosenbaum2010sparse}
Mathieu Rosenbaum and Alexandre~B Tsybakov.
\newblock Sparse recovery under matrix uncertainty.
\newblock {\em The Annals of Statistics}, 38(5):2620--2651, 2010.

\bibitem{tibshirani1996regression}
Robert Tibshirani.
\newblock Regression shrinkage and selection via the lasso.
\newblock {\em Journal of the Royal Statistical Society. Series B
  (Methodological)}, pages 267--288, 1996.

\bibitem{tropp2004greed}
Joel Tropp et~al.
\newblock Greed is good: Algorithmic results for sparse approximation.
\newblock {\em Information Theory, IEEE Transactions on}, 50(10):2231--2242,
  2004.

\bibitem{van2009conditions}
Sara~A Van De~Geer, Peter B{\"u}hlmann, et~al.
\newblock On the conditions used to prove oracle results for the lasso.
\newblock {\em Electronic Journal of Statistics}, 3:1360--1392, 2009.

\bibitem{wang2014robust}
Ling Wang, Hong Cheng, Zicheng Liu, and Ce~Zhu.
\newblock A robust elastic net approach for feature learning.
\newblock {\em Journal of Visual Communication and Image Representation},
  25(2):313--321, 2014.

\bibitem{xu2013outlier}
Huan Xu, Constantine Caramanis, and Shie Mannor.
\newblock Outlier-robust pca: The high-dimensional case.
\newblock {\em Information Theory, IEEE Transactions on}, 59(1):546--572, 2013.

\bibitem{yang2011robust}
Meng Yang, Lei Zhang, Jian Yang, and David Zhang.
\newblock Robust sparse coding for face recognition.
\newblock In {\em Computer Vision and Pattern Recognition (CVPR), 2011 IEEE
  Conference on}, pages 625--632. IEEE, 2011.

\bibitem{zou2005regularization}
Hui Zou and Trevor Hastie.
\newblock Regularization and variable selection via the elastic net.
\newblock {\em Journal of the Royal Statistical Society: Series B (Statistical
  Methodology)}, 67(2):301--320, 2005.

\end{thebibliography}
}
\end{document}